
\documentclass{article}

\usepackage{microtype}
\usepackage{graphicx}
\usepackage{booktabs} 

\usepackage{hyperref}


\newcommand\mydots{\hbox to 1em{.\hss.\hss.}}


\usepackage[accepted]{icml2025}

\usepackage{amsmath}
\usepackage{amssymb}
\usepackage{mathtools}
\usepackage{amsthm}

\usepackage[capitalize,noabbrev]{cleveref}

\theoremstyle{plain}
\newtheorem{theorem}{Theorem}[section]

\newtheorem{lemma}[theorem]{Lemma}

\newtheorem{conjecture}[theorem]{Conjecture}
\theoremstyle{definition}

\theoremstyle{remark}
\newtheorem{remark}[theorem]{Remark}

\usepackage[textsize=tiny]{todonotes}

\usepackage{enumitem}
\usepackage{subcaption}
\usepackage{tabularray}
\usepackage{tcolorbox}
\usepackage{tabularx}
\usepackage{listings}
\usepackage{algorithm}
\usepackage{algorithmic}

\newtcolorbox{highlight}[1][]{
    colback=yellow!10,
    colframe=gray!30,
    boxrule=0.5pt,
    arc=2pt,
    leftrule=3pt,
    rightrule=3pt,
    toprule=1pt,
    bottomrule=1pt,
    #1
}

\icmltitlerunning{\emph{ENTP:} Encoder-only Next Token Prediction}

\begin{document}

\twocolumn[
\icmltitle{\emph{ENTP:} Encoder-only Next Token Prediction}



\icmlsetsymbol{equal}{*}

\begin{icmlauthorlist}
\icmlauthor{Ethan Ewer}{equal,wisconsin}
\icmlauthor{Daewon Chae}{equal,korea}
\icmlauthor{Thomas Zeng}{equal,wisconsin}
\icmlauthor{Jinkyu Kim}{korea}
\icmlauthor{Kangwook Lee}{wisconsin}
\end{icmlauthorlist}

\icmlaffiliation{wisconsin}{University of Wisconsin-Madison}
\icmlaffiliation{korea}{Korea University}

\icmlcorrespondingauthor{Kangwook Lee}{kangwook.lee@wisc.edu}


\vskip 0.3in
]



\printAffiliationsAndNotice{\icmlEqualContribution} 

\begin{abstract}
Next-token prediction is conventionally done using decoder-only Transformers with causal attention, as this approach allows for efficient reuse of keys and values. 
What if we were not compute-limited, should we still use decoder-only Transformers? 
In this work, we introduce \textbf{E}ncoder-only \textbf{N}ext \textbf{T}oken \textbf{P}rediction (ENTP).
We explore the differences between ENTP and decoder-only Transformers in expressive power and complexity, highlighting potential advantages of ENTP in settings with unbounded compute.
We introduce the $\operatorname{Count3}$ task and show, both theoretically and experimentally, that while ENTP can perform this task easily, a decoder-only Transformer cannot. 
Finally, we empirically demonstrate the superior performance of ENTP across representative tasks where next-token prediction based Transformers can be evaluated, including addition, in-context learning, and language modeling.
\end{abstract}

\section{Introduction}
Traditionally, auto-regressive language modeling has relied on decoder-only Transformers~\citep{vaswani2023attentionneed} with causal attention, trained using the next-token prediction objective. Causal attention ensures that each token can only attend to previous tokens, preventing future tokens from influencing past outputs. This mechanism makes training and inference more efficient, as past keys and values do not need to be recomputed for each token. This efficiency enables the scaling of decoder-only Transformers, such as GPT-4 \citep{openai2024gpt4technicalreport} and Llama-3 \citep{dubey2024llama3herdmodels}, up to billions of parameters using current hardware.

However, causal attention also introduces artificial constraints. Given tokens $x_1, x_2, ..., x_n$, the contextual embedding of $x_j$ (where $j < n$) can only attend to embeddings of earlier tokens, even when predicting $x_{n+1}$. While this constraint ensures a strict causal structure, it may not always be necessary or beneficial. We investigate what happens when we remove this constraint, while still maintaining causality externally.

Specifically, we look at Encoder-only Transformers, which are typically used for tasks like classification, and do not impose this causality constraint. Though traditionally not used for auto-regressive tasks, encoder-only architectures can be adapted for next-token prediction. When computing the output at the current time step, an encoder-only Transformer, or any sequence model, can be made causal by only providing inputs up to and including the current time step. Therefore, in this work, we investigate the idea of using encoder-only Transformers for next-token prediction. We summarize our findings below.

\begin{figure}[t]
    \centering
    \includegraphics[width=1.0\linewidth]{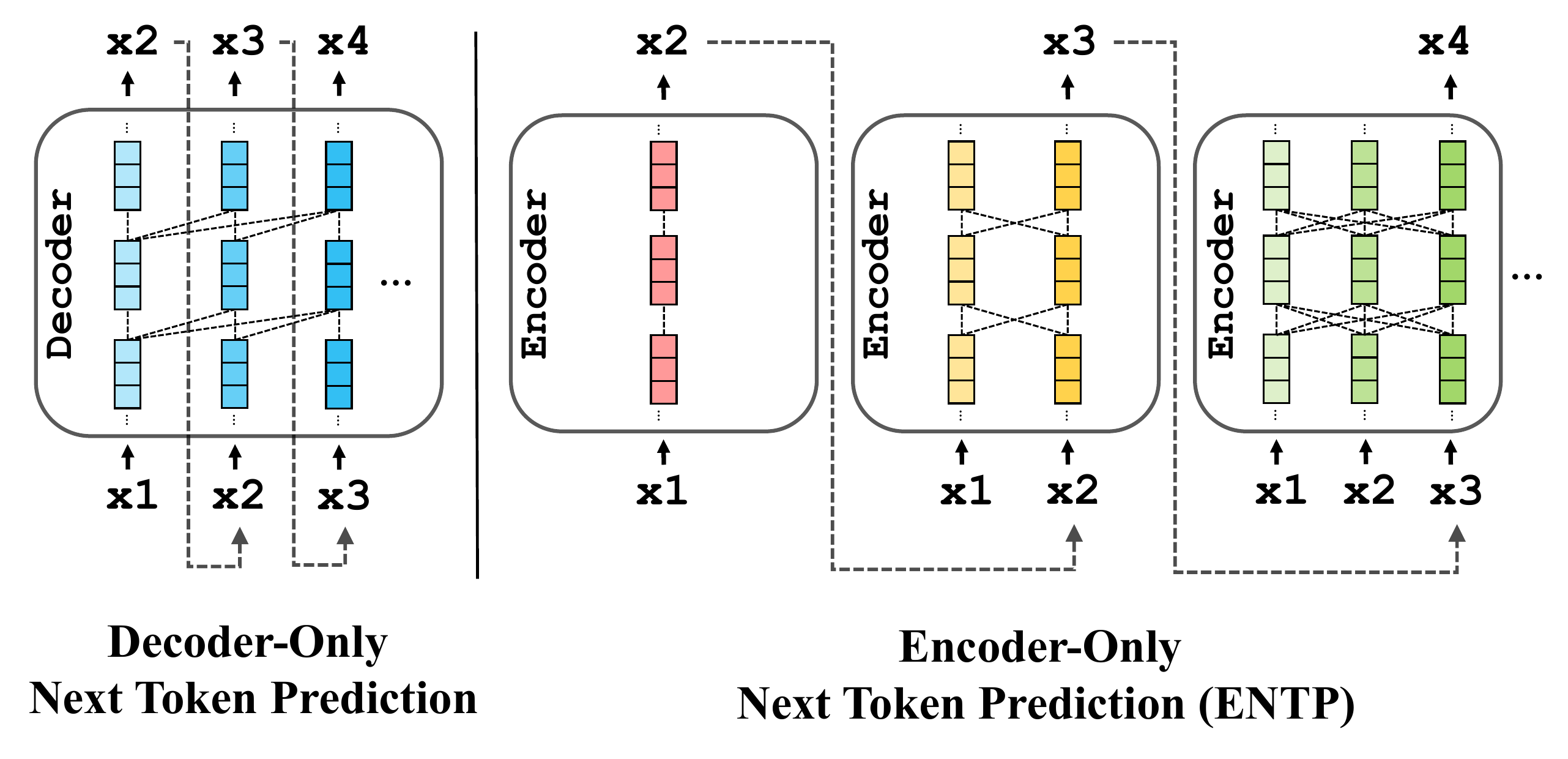}
    \vspace{-1.0em}
    \caption{\textbf{Decoder-only vs. Encoder-only Transformers in next token prediction.} Decoders use causal attention, ensuring that each token attends only to the preceding tokens. In contrast, encoders allow all tokens to attend to each other by performing attention computation from scratch for each token prediction.}
    \label{fig:ced}
    \vspace{-1.0em}
\end{figure}

\paragraph{Functions expressible with Decoder-only and Encoder-only Transformers.} We demonstrate that the sets of functions expressible by decoder-only and encoder-only Transformers are not comparable, which goes against intuition that the expressivity of encoders would subsume that of decoders. Rather, there exist functions expressible with decoder-only Transformers that are not expressible with encoder-only Transformers, and vice versa, as well as functions expressible by both architectures.

\paragraph{Complexity of Decoder-only and Encoder-only Transformers.} Based on the minimum time and space complexities, we give a description of the functions that can be performed by decoder-only and encoder-only Transformers. We propose an auto-regressive task that can be performed by encoder-only Transformers, and cannot be performed by decoder-only Transformers (given that some mild assumptions hold). We validate our hypothesis
with small experiments using small decoder-only and encoder-only Transformers, as well as experiments fine-tuning GPT-4o \citep{openai2024gpt4technicalreport}, Llama3-8B \citep{dubey2024llama3herdmodels}, and BERT~\citep{devlin2019bertpretrainingdeepbidirectional}.

\paragraph{Experiments on Small-Scale Language Modeling Tasks.} We compare the performance of decoder-only and encoder-only Transformers on a range of small-scale language modeling tasks. We test the sample complexity and length generalization capabilities of decoders and encoders using arithmetic tasks~\citep{leeteaching}. We also train both models to perform in-context learning~\citep{icl} on various simple functions. Additionally, we train small decoder-only and encoder-only Transformers on a large text dataset~\citep{Gokaslan2019OpenWeb} and assess their performance on language modeling tasks.

\section{Related Work}
\paragraph{Expressive Power of Transformers.}
There have been various literature exploring the expressive power of Transformers. From the lens of universal approximation, \citet{yun2020Transformers} showed that any continuous sequence-to-sequence function over a compact set can be approximated arbitrary close by a Transformer (of finite albeit very large size). Other works approach expressiveness from the perspective of computability and complexity such as \citet{perez2021attention} which showed Transformers are Turing complete and \citet{merrill2022saturated, merrill2024logic, li2024chain} which use circuit complexity to characterize the languages recognizable by Transformers of fixed depth. \citet{pmlr-v202-giannou23a} presents a framework for Transformers as universal computers by placing them in a loop. Communication complexity has also been used to show the impossibility of one-layer Transformers of expressing certain functions e.g.~induction head, without the model size being linear in the length of the input \citep{tf-limitations,sanford2024one}. 
We note that the existing bounds \citep{tf-limitations,sanford2024one} are highly related, but they do not directly imply the relative expressive power of encoder and decoder (and specifically of the same fixed model size).

\paragraph{Transformer Architectures for Next Token Prediction.}
Transformers have become the de facto backbones for next-token prediction tasks, leading to several variants such as encoder-decoder, causal decoder-only, and prefix decoder-only models. In the encoder-decoder model~\citep{bart, chung2024scaling}, similar to the vanilla Transformer~\citep{vaswani2023attentionneed}, the encoder transforms the input tokens into conditioning features, and the decoder auto-regressively predicts the target tokens by using cross-attention over the encoded representation and causal attention over the output tokens. In contrast, the causal decoder-only model~\citep{gpt3, palm} uses only the Transformer decoder and applies causal attention to all tokens to perform next-token prediction, ensuring that each token attends only to previous tokens. The prefix decoder-only model~\citep{T5, wu2021yuan} is similar to the causal decoder-only model but differs in that it applies non-causal attention (i.e., full self-attention) to the input sequence (see Figure~\ref{fig:attention_pattern} for visualizations of the attention patterns in these variants).

With the development of these models, recent studies have investigated the performance of each variant across various tasks. Notably, \citet{zeroshot}  examined the zero-shot generalization performance of each model along with various objectives, and \citet{ding2024causallm} analyzed the performance of causal decoder-only and prefix decoder-only models in in-context learning. \citet{patel2023bidirectionallanguagemodelsfewshot} showed that bidirectional language models, trained with denoising objectives such as masked language modeling, can be prompted in an auto-regressive manner, achieving performance comparable to larger decoder-only Transformers. Building on this foundation, ENTP explores the training of bidirectional language models using auto-regressive objectives.

\section{Preliminaries}
\paragraph{Sequence-to-Token Functions and Autoregression.}
Given a vocabulary $V$ (which we traditionally think of as some finite set, but could in general be any arbitrary perhaps uncountable set e.g.~$\mathbb R^d$), we can define a sequence over $V$ as $(x_1,\mydots,x_n)$ where $x_1,\mydots,x_n \in V.$ Let $V^* = \{(x_1,\mydots,x_n): n\in\mathbb N; x_i \in V\}$ be the set of all sequences generated by $V$. Then, we say that $f:V^* \to V$ is a sequence-to-token function. 

We can view a causal model as a map from an input sequence to an output sequence with the causality constraint that the $i$'th output token depends only on the first $i$ input tokens. Mathematically, we enforce this causality constraint by characterizing our causal model, $\mathcal T_f:V^* \to V^*$, with some sequence-to-token function $f$
where on input sequence $(x_1,\mydots,x_n)$ we have that
\begin{equation}\label{eq:Tf}
      \!\!\!\!\!\!\!\mathcal T_f(x_1,\mydots,x_n) \coloneqq (f(x_1),f(x_1,x_2),\mydots, f(x_1,\mydots,x_n)).
\end{equation}
Observe that a sequence-to-token function $f$ can be used auto-regressively to generate tokens from some initial sequence $(x_1,\mydots,x_n)$ via the following update rule:
\begin{equation}\label{eq:auto}
    x_{n+i} \coloneqq f(x_1,\mydots,x_{n+i-1}).
\end{equation}
This can also be viewed as a special case of the causal model, where the input sequence is chosen so that
\begin{equation}
 \mathcal T_f(x_1,x_2,\mydots,x_n) = (x_2,x_3,\mydots,x_{n+1}).   
\end{equation}

Hence, if we are trying to learn a causal or auto-regressive model, it suffices to learn the sequence function that generates it. Thus in this paper, we focus on the type of sequence functions that encoders versus decoders can learn and express.

\paragraph{Encoders and Decoders.}
We will use the letters $\mathcal E$ and $\mathcal D$ respectively to refer to encoders and decoders. In this paper, both models refer to variants of the Transformer architecture introduced in \citet{vaswani2023attentionneed}, where the only difference lies in the masking used on the attention scores (decoder uses a causal mask while encoder allows full attention, as illustrated in \Cref{fig:ced}).

The model size of a Transformer is determined by two parameters:
\begin{itemize}
    \vspace{-1em}
    \setlength\itemsep{0.1em}
    \item $L$: number of Transformer blocks.
    \item $D$: embedding dimension.
    \vspace{-0.5em}
\end{itemize}
As Transformers are sequence-to-sequence maps, we will use subscript notation where $\mathcal T(x_1,\mydots,x_n)_i$ denotes the $i$'th value in the output sequence. We also allow our models the option to use positional embeddings. Tilde-notation i.e.~$\widetilde{\mathcal E}$ and $\widetilde{\mathcal D}$ will denote models that do not use positional embeddings. 
For token embeddings $x_1,\mydots,x_n$ and positional embeddings $p_1,\mydots,p_n$:
\begin{equation*}
   \mathcal E(x_1,\mydots,x_n) \coloneqq \widetilde{\mathcal{ E}}(x_1+p_1, \mydots, x_n + p_n).
\end{equation*}
In our experiments, we will use encoder and decoder models that have access to trainable positional embeddings.

\paragraph{Encoder and Decoders as Causal Models.}
Given encoder $\mathcal E$ and decoder $\mathcal D,$ we can associate them with sequence-to-token functions as follows:
\begin{align*}
    f_{\mathcal E}: (x_1,\mydots,x_n) &\mapsto \mathcal E(x_1,\mydots,x_n)_n\\
    f_{\mathcal D}: (x_1,\mydots,x_n) &\mapsto \mathcal D(x_1,\mydots,x_n)_n.
\end{align*}
We then have $\mathcal T_{\mathcal E}$ and  $\mathcal T_{\mathcal D}$ as the causal models of $\mathcal E$ and $\mathcal D$ when used as sequence functions $f_{\mathcal E}$ and $f_{\mathcal D}$ respectively.\footnote{To be fully consistent with notation in \eqref{eq:Tf}, we should denote them as $\mathcal T_{f_{\mathcal E}}$ and $\mathcal T_{f_{\mathcal D}}$ respectively. However we abuse notation and use $\mathcal T_{\mathcal E}$ and $\mathcal T_{\mathcal D}$ for sake of simplicity.} 

Under this characterization, we make two observations. Firstly, we can view $\mathcal T_{\mathcal E}$ as an explicit and necessary way to introduce causality to the encoder $\mathcal E$ since there is nothing implicit to the encoder that forces causality. Secondly, in juxtaposition to the previous statement, the causal model $\mathcal T_{\mathcal D}$ is exactly equivalent to just using $\mathcal D$, that is $\mathcal T_{\mathcal D} = \mathcal D.$ This is because $\mathcal D$ enforces causality implicitly via the attention mask (see \Cref{sec:tdd} for formal proof). Therefore, the explicit enforcement becomes redundant.

\section{Expressive Power of Encoder-only vs.~Decoder-only Transformers}
Given that we can learn causal functions (defined as causal model in preliminary) using either encoders and decoders, the natural question to ask is how the expressive power of each model is related, i.e.~can the encoder express more causal functions than a decoder of the \emph{same} model size? Or perhaps they express the exact same class of causal functions? Towards answering this question, one trivial observation is that one-layer decoders and encoders are equivalent (formal proof in \Cref{sec:ede}) which directly implies the existence of causal functions\footnote{Here causal function refers to sequence-to-sequence function where the outputs are only determined by current and previous inputs, i.e. $\mathcal F(x_1, x_2, ..., x_n)_i$ only depends on $x_1, x_2, ..., x_i$, and not $x_j$ for any $j > i$.} that both architecture can model exactly.

Now, what about causal functions that a decoder can model but not encoder, or vice versa --- that encoder can model but not decoder? 
These functions exist too, as we show in the following two theorems:

\begin{theorem}\label{thm:dyen}
    For any $L\ge 2$ and $D \ge 1,$ there exists a position-free decoder $\widetilde{\mathcal D}$ that has $L$-layers and embedding dimension $D,$ such that for any encoder $\mathcal E$, there exists some input sequence $(x_1, x_2, \mydots)$ with $x_1,x_2,\mydots \in\mathbb R^D,$ and $\mathcal T_{\widetilde{\mathcal D}}(x_1, x_2, \mydots) \neq \mathcal T_{\mathcal E}(x_1,x_2,\mydots)$.
\end{theorem}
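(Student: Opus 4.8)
The plan is to exhibit a single position-free decoder $\widetilde{\mathcal D}$ whose induced causal map $\mathcal T_{\widetilde{\mathcal D}}$ cannot be matched by any encoder on all inputs. The key structural asymmetry to exploit is that in an encoder $\mathcal E$, the output at position $n$ is computed by attending over \emph{all} of $x_1,\dots,x_n$ simultaneously, so the token value $x_n$ influences the contextual embeddings of \emph{every} earlier position; whereas in the decoder, the embedding of $x_j$ for $j<n$ depends only on $x_1,\dots,x_j$ and is frozen regardless of what comes later. So I want a target causal function that, at some position, must ``remember'' an early token's contribution in a way that is insensitive to later tokens — something a decoder does for free but which forces an encoder into a contradiction.

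Concretely, I would first reduce to a statement about the position-$2$ output: since $\mathcal T(x_1,x_2,\dots)_2 = f(x_1,x_2)$, it suffices to build a $2$-layer position-free decoder computing some $f_{\widetilde{\mathcal D}}(x_1,x_2)$ such that no encoder's $f_{\mathcal E}$ agrees with it on all of $\mathbb R^D$ while \emph{also} being forced to agree on longer sequences — the trick is to pick $f_{\widetilde{\mathcal D}}$ so that matching it on length-$2$ inputs pins down the encoder's first-layer behavior, and then matching on a cleverly chosen length-$3$ input $(x_1,x_2,x_3)$ creates an inconsistency. For the decoder construction I would use the standard fact (cited from \Cref{rem:dyey}'s discussion) that layers can be made to act as identity via zero weights, so only one nontrivial layer is needed; choose its attention/MLP so that $f_{\widetilde{\mathcal D}}(x_1,\dots,x_n)$ depends on $x_1$ and $x_n$ in a simple nonlinear way — e.g.\ outputs something like a function of the \emph{averaged} prefix that, crucially, weights $x_1$ with coefficient depending on position. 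Because the decoder at position $j$ sees a prefix of length $j$, the ``weight'' on $x_1$ in its output differs across positions, and this position-dependence is exactly what a position-free encoder — which at position $n$ just runs full attention over a set — cannot replicate consistently across different $n$.

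The cleanest route is likely a counting/averaging argument: set things up so that $f_{\widetilde{\mathcal D}}(x_1,\dots,x_n)$ equals, say, $x_1$ itself (the decoder can copy the first token forward through causal attention that always attends to position $1$ — but wait, position-free attention can't single out position $1$; this is the crux). Actually this suggests the reverse: a position-free decoder \emph{also} can't single out position $1$ by index, so I should instead use a \emph{value}-based selection, e.g.\ $f_{\widetilde{\mathcal D}}$ outputs a function determined by which of $x_1,\dots,x_n$ is largest, combined with the causal structure so that at position $2$ the answer is "$\max(x_1,x_2)$-flavored" but the decoder's two-layer computation produces a specific functional form. Then given any encoder claimed to match, evaluate $\mathcal T_{\mathcal E}$ and $\mathcal T_{\widetilde{\mathcal D}}$ on a family of inputs designed so that the encoder's non-causal mixing at the final position leaks information that the decoder output provably ignores, yielding $x_1,x_2,x_3$ where the two disagree.

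\textbf{Main obstacle.} The hard part will be making the decoder construction explicit enough — writing down the $2$-layer weights (attention pattern, value matrices, MLP) so that the induced $f_{\widetilde{\mathcal D}}$ has a closed form — while keeping it genuinely \emph{position-free}, since that rules out the most obvious "copy token $1$" strategy and forces the selection to be value-driven; and then, on the encoder side, arguing the impossibility uniformly over \emph{all} encoders of \emph{all} sizes rather than a fixed one, which means the disagreement input must be constructed adversarially as a function of the given $\mathcal E$ (presumably via a continuity/intermediate-value or pigeonhole argument on $\mathbb R^D$). I expect the encoder-impossibility step, not the decoder construction, to carry the real weight of the proof.
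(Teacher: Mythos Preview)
Your proposal has the right overall shape but contains a genuine gap on both sides of the argument, and you actually abandon the correct idea midway through.

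On the decoder construction: your instinct about ``averaged prefix with position-dependent weighting on $x_1$'' is exactly right, and you should not have pivoted away from it to value-based $\max$ selection. The paper's construction is much simpler than you anticipate: set $\mathbf W_Q = \mathbf W_K = \mathbf 0$ and $\mathbf W_V = \mathbf I$ in the first two attention blocks (and zero everything else). Zero query/key weights make attention uniform, so the causal mask alone produces the prefix average; with skip connections, two such layers give outputs like $4x_1$ at position $1$ and $\tfrac{7x_1+9x_2}{4}$ at position $2$. The position-dependence you wanted comes for free from the causal mask varying the prefix length --- no value-driven selection is needed, and your worry that ``position-free attention can't single out position $1$'' is a non-issue because the construction never tries to.

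On the encoder impossibility: this is where your proposal is furthest off. You suggest continuity, intermediate-value, or pigeonhole arguments, none of which is the mechanism. The paper's argument is an algebraic dichotomy on the encoder's positional embeddings. First, the decoder's length-$3$ output has unequal coefficients on $x_1,x_2$, so any matching encoder must have $p_1 \neq p_2$ (else it would be permutation-invariant on those positions). But once $p_1 \neq p_2$, you can choose $y_1 \neq y_2$ with $y_1 + p_1 = y_2 + p_2 = c$; then the underlying position-free encoder sees $(c)$ and $(c,c)$, and since full attention over identical tokens is idempotent in the number of copies, $\mathcal E(y_1)_1 = \mathcal E(y_1,y_2)_2$. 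The decoder, however, outputs $4y_1$ versus $\tfrac{7y_1+9y_2}{4}$, which differ since $y_1 \neq y_2$. This ``cancel the positional embeddings to collapse the encoder'' trick is the missing idea in your proposal, and it is what lets the argument work uniformly against encoders of \emph{all} sizes without any analytic machinery.
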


\begin{theorem}\label{thm:eydn}
  For any $L\ge 2$ and $D \ge 1,$ there exists a position-free encoder $\widetilde{\mathcal E}$ that has $L$-layers and embedding dimension $D,$ such that for any decoder $\mathcal D$ with positional embeddings satisfying $p_1\neq p_2$, there exists some input sequence $(x_1, x_2, \mydots)$ with $x_1,x_2,\mydots \in\mathbb R^D,$ and $\mathcal T_{\widetilde{\mathcal E}}(x_1,x_2,\mydots)\neq \mathcal T_{\mathcal D}(x_1, x_2, \mydots)$.
\end{theorem}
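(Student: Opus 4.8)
The plan is to leverage the one structural property that separates a position-free encoder from a decoder whose positional embeddings satisfy $p_1\neq p_2$: permutation equivariance. Since self-attention without positional information, the token-wise MLP, and the residual connections all commute with permuting the input positions, any position-free encoder $\widetilde{\mathcal E}$ is permutation-equivariant, so its induced function $f_{\widetilde{\mathcal E}}(x_1,\dots,x_n)=\widetilde{\mathcal E}(x_1,\dots,x_n)_n$ is invariant under every permutation of $x_1,\dots,x_{n-1}$; in particular $f_{\widetilde{\mathcal E}}(x_1,x_2,x_3)=f_{\widetilde{\mathcal E}}(x_2,x_1,x_3)$ for all inputs, and this holds for every $\widetilde{\mathcal E}$ regardless of depth.

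Next I would fix a concrete position-free encoder with $L\ge2$ layers and embedding dimension $D$ --- padding with identity layers when $L>2$, using the zero-weight identity-layer observation noted after Remark~\ref{rem:dyey} --- whose induced function is not merely symmetric in its first $n-1$ arguments but \emph{rigid}: on length-$3$ inputs, $f_{\widetilde{\mathcal E}}(x_1,x_2,x_3)$ should depend on the pair $x_1,x_2$ through a symmetric term that is not a function of $x_1+x_2$ alone and is not invariant under any nonzero shift $(x_1,x_2)\mapsto(x_1-\delta,x_2+\delta)$, such as the entrywise square of $x_1-x_2$. Concretely, one layer forms a permutation-symmetric summary of the prefix while the residual stream still carries the current token and the MLPs shape the output; the only subtlety here is realizing such a function exactly in the presence of residual connections, which an explicit weight construction handles --- alternatively, it suffices that whatever $\widetilde{\mathcal E}$ computes is symmetric in $x_1,x_2$ and shift-sensitive, which is all the argument below needs.

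For the main step, assume toward a contradiction that some decoder $\mathcal D$ with $p_1\neq p_2$ satisfies $\mathcal T_{\mathcal D}=\mathcal T_{\widetilde{\mathcal E}}$. Comparing the last coordinate of $\mathcal T$ on every prefix forces $f_{\mathcal D}=f_{\widetilde{\mathcal E}}$, so $f_{\mathcal D}(x_1,x_2,x_3)=f_{\mathcal D}(x_2,x_1,x_3)$ for all $x_1,x_2,x_3$. Writing $f_{\mathcal D}(x_1,x_2,x_3)=\widetilde{\mathcal D}(x_1+p_1,x_2+p_2,x_3+p_3)_3$ and substituting $z_i=x_i+p_i$, this becomes the shifted-swap identity $\widetilde{\mathcal D}(z_1,z_2,z_3)_3=\widetilde{\mathcal D}(z_2+\delta,z_1-\delta,z_3)_3$ with $\delta=p_1-p_2\neq0$. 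Now bring in that $\widetilde{\mathcal D}$ is causal: at the last position a causal network attends to all earlier slots, and in a one-layer decoder the resulting aggregation --- and hence $\widetilde{\mathcal D}(\cdot,\cdot,z_3)_3$ --- is symmetric in its first two arguments outright; composing that symmetry with the shifted swap shows that $f_{\widetilde{\mathcal E}}(\cdot,\cdot,x_3)$ is invariant under $(x_1,x_2)\mapsto(x_1-\delta,x_2+\delta)$ with $\delta\neq0$, contradicting rigidity. (In the easy sub-case where $\mathcal D$ is not symmetric in positions $1$ and $2$ at length $3$, any triple $(a,b,c)$ with $f_{\mathcal D}(a,b,c)\neq f_{\mathcal D}(b,a,c)$ already witnesses the theorem, since $f_{\widetilde{\mathcal E}}$ agrees on $(a,b,c)$ and $(b,a,c)$.)

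The crux is extending the last step from one-layer to arbitrary decoders. For $L\ge2$ the final-position output is no longer automatically symmetric in the earlier slots --- positions $1$ and $2$ are processed differently before being re-read at position $3$ --- and the shifted swap alone is merely an involution on the arguments, hence too weak on its own. Pushing the contradiction through at depth $\ge2$ (and against decoders of arbitrary size) is the main work; the natural route is to impose the consistency constraints at all prefix lengths simultaneously --- for instance also using that a position-free encoder satisfies $f_{\widetilde{\mathcal E}}(a,\dots,a)=f_{\widetilde{\mathcal E}}(a)$ on any constant sequence while a decoder with $p_1\neq p_2$ is fed the non-constant input $(a+p_1,a+p_2,\dots)$ --- and to tailor the rigid target of step two so that no causal, positionally-aware network can meet all of them at once. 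A secondary, routine obstacle is the exact weight construction of the encoder in step two.
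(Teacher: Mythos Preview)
Your proposal has a genuine gap, which you yourself identify: the length-$3$ symmetry argument only closes for one-layer decoders, and you do not show how to extend it to arbitrary-depth decoders. Since the theorem quantifies over \emph{all} decoders $\mathcal D$ (of any depth), this is fatal unless resolved, and the ``shifted-swap identity'' together with rigidity is not enough on its own once $\widetilde{\mathcal D}(\cdot,\cdot,z_3)_3$ is no longer symmetric in its first two arguments.

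The paper's proof avoids this entirely by working at length $2$ rather than length $3$, and by reversing the direction of your constant-sequence idea. You suggest feeding a constant sequence $(a,a,\dots)$ so that the encoder's output is constant while the decoder internally sees the non-constant $(a+p_1,a+p_2,\dots)$; but nothing forces the decoder's \emph{output} to be non-constant, so no contradiction follows. The paper instead chooses $x_1=0$ and $x_2=p_1-p_2\neq 0$, so that after adding positional embeddings the decoder sees the \emph{constant} pair $(p_1,p_1)$. A position-free Transformer on a constant input propagates the same value at every position through every layer (each attention head averages identical values), so $\mathcal D(x_1,x_2)=\widetilde{\mathcal D}(p_1,p_1)=(c,c)$ regardless of the decoder's depth or width. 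Meanwhile the specific encoder $\widetilde{\mathcal E}$ --- two attention layers with $\mathbf W_Q=\mathbf W_K=\mathbf W_V=\mathbf I$ and everything else zeroed --- satisfies $\mathcal T_{\widetilde{\mathcal E}}(x_1,x_2)=(4x_1,\alpha x_1+\beta x_2)$ with $\beta>0$, hence $\mathcal T_{\widetilde{\mathcal E}}(0,x_2)=(0,\beta x_2)$ is not constant. That single length-$2$ input already witnesses the disagreement, for every decoder with $p_1\neq p_2$. Your rigidity construction and the length-$3$ machinery are therefore unnecessary; the whole argument fits on length-$2$ inputs once you pick the input so the decoder, not the encoder, is the one forced onto a constant sequence.
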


The above two theorems are existential in nature. Informally, \Cref{thm:dyen} says that if we consider causal model defined over the entirety of $\mathbb R^D$ as its vocabulary, we can find some decoder, for which any encoder will differ from it on some input sequence. \Cref{thm:eydn} makes a similar (albeit weaker statement) in the other direction; namely the existence of a causal function computable by an encoder, but not by any decoder that uses ``non-trivial'' positional embeddings (e.g.~embeddings for different positions are unique).
Detailed proof of both theorems are deferred to \Cref{sec:dpf}.

Of course, the setting and assumptions of the above two statements are not necessarily very realistic. For one, they focus on general class of causal models rather than only auto-regressive ones. Furthermore, the results only pertain to exact realization and say nothing about approximation. The assumption of unbounded domain is also not realistic as in practice decoders are trained and used over a finite domain of tokens, each with some fixed embeddings. And specific to \Cref{thm:eydn}, no claim is made about decoders that do not use positional embeddings. But despite the limitations, these theorems give an indication that the expressive power of encoder and decoder model are different --- despite the almost identical description modulo the attention mask. 
Changing the mask on the attention scores causes significant changes to the properties of the model. Thus, in the following sections we propose an auto-regressive tasks and run experiments comparing encoders and decoders that corroborates this view.

\begin{highlight}
    \paragraph{Finding 1:} 
    \emph{ENTP and decoder-only Transformers express different classes of functions.}
\end{highlight}

\section{Time and Space Complexity Comparisons}
Inspired by the different computational models of encoder-only and decoder-only Transformers, we characterize the causal sequence functions learnable by encoders and decoders based on their required computational complexity. We give an informal comparison of encoders and decoders in terms of their required time and space complexities --- both over the entire sequence and for each additional token. We propose $\operatorname{Count3}$, which is closely related to $\operatorname{Match3}$ \citep{tf-limitations}, to highlight the ``gap'' between the complexity of encoders and decoders. 
$\operatorname{Count3}$ is feasible for an encoder but challenging for a decoder due to its limited computation complexity.

\paragraph{Time Complexity Comparison.} Decoder-only Transformers, using KV-Cache, take $O(n)$ time to generate each token, and $O(n^2)$ time to generate an entire sequence. Because an ENTP has to compute the entire attention matrix for every token, it takes $O(n^2)$ time to generate each token. Thus, it takes $O(n^3)$ time to generate the entire sequence. While this implies that ENTP is more compute-intensive (i.e., ENTP will be slower than decoder-only Transformer), this also implies that ENTP can express more compute-intensive functions than decoders. Specifically, since the total amount of compute that decoders use for generating n tokens is $O(n^2)$, they cannot run any algorithm whose runtime is $\omega(n^2)$ (strictly greater than quadratic time).

\paragraph{Space Complexity Comparison.} Both encoder-only and decoder-only use $O(n)$ space complexity to generate an entire sequence. Although the standard implementation of attention uses $O(n^2)$ space, attention can be implemented using only $O(n)$ space. For details of algorithmic implementation of attention using constant space per token, refer to \Cref{alg:linear-memory-attn} in the appendix. Thus, we need a more detailed approach to find a difference between the space complexity of the two models.

Towards this end, we classify memory used for the computation over the current token as either precomputed or additional. Precomputed memory stores values from computation over past tokens. Values stored in precomputed memory \emph{persist}, and are used for computation over current and future tokens, e.g.~the keys and values of previous tokens for a decoder. Additional memory stores values that depend on the current token, e.g.~keys and values of the current token.

When generating the $n$th token, a decoder uses $O(n)$ precomputed memory to store keys and values of previous tokens and $O(1)$ additional memory to compute results over the current token. An encoder computes everything from scratch for each token, so it uses $O(n)$ additional memory and no precomputed memory. Under this view, there is a space complexity gap between encoder and decoder.

\setlength{\tabcolsep}{18pt}
\renewcommand{\arraystretch}{1.2} 
\vspace{-0.5em}
\begin{table}[ht]
    \centering
    \caption{Complexity for next-token inference.}
    \label{tab:cc}
    \vspace{-0.5em}
    \resizebox{1.0\linewidth}{!}{
       \begin{tabular}{c|cc}
            \toprule
            Complexity & Encoder-only & Decoder-only \\
            \midrule
            Additional Time Complexity  & $O(n^2DL)$ & $O(nDL)$  \\
            Precomputed Space Complexity  & N/A & $O(nDL)$ \\
            Additional Space Complexity & $O(nD)$ & $O(D)$\\
            \bottomrule
        \end{tabular}
    }
\end{table}

Most of our complexity analysis focuses on Transformers with fixed sizes, so we primarily consider complexity with respect to the sequence length $n$. However, we also account for the embedding dimension $D$ and the number of layers $L$  in \Cref{tab:cc}. 
Both encoder-only and decoder-only Transformers use $O(LD)$ time because the attention operation is performed $O(L)$ times, and computing each query, key, and value vector is $O(D)$. In the case of multi-head attention, we assume $D = hd$, where $h$ is the number of heads and $d$ is the dimension of the query, key, and value vectors. A decoder uses $O(nhdL) = O(nDL)$ precomputed space because it stores $nhL$ query, key, and value $d$-dimensional vectors. Both encoders and decoders use $O(D)$ additional space for current token's embedding vector --- and we specifically note that there is no dependence on $L$ as Transformer do computation sequentially on the layer (i.e.~all the additional computation required for layer $\ell$ is done before all the additional computation required for layer $\ell+1$). Thus, we can do the computation over $L$ layers using $O(D)$ space by overwriting computation over previous layers.
\section{Task-Specific Analysis with $\operatorname{Count3}$}

Consider the sequence function that maps an input sequence of positive integers $x_1, x_2, \mydots, x_n$ to the number of pairs $x_i, x_j$ where the modulo-$n$ sum of $x_i, x_j$ and $x_n$ is equal to $0$. More formally,
\begin{equation}\label{eq:ftrip}
\begin{split}
        \operatorname{Count3}(x_1,x_2,\mydots,x_n) \coloneqq \Big| \big\{(i,j) \in [n]^2 : \\ x_i + x_j + x_n \equiv 0 \pmod n\big\}\Big| \pmod n.
\end{split}
\end{equation}
$\operatorname{Count3}$ is an augmented version of $\operatorname{Match3}$ \citep{tf-limitations}. As shown in 
\citet{tf-limitations}, the triple-wise relationships, used in both $\operatorname{Match3}$ and $\operatorname{Count3}$, are difficult for Transformers to represent, because of the pairwise nature of self-attention.

Note that there exists algorithms that can compute $\operatorname{Count3}$ on some length-$n$ input sequence  $x_1, x_2, \mydots, x_n$ in either $O(n^2)$ time and $O(1)$ space or in $O(n)$ time and $O(n)$ space. See \Cref{alg:tc1} and \Cref{alg:tc2} in \Cref{sec:count3-algs} for exact pseudocode implementations. In brief, \Cref{alg:tc1} iterates through all $n^2$ pairs checking if they meet the modulo-$n$ sum requirement. \Cref{alg:tc2} uses the fact that $x_i + x_j + x_n \equiv 0 \pmod n$ is equivalent to $x_i + x_n \equiv -x_j \pmod n$. In two linear passes, it counts each $-x_j \pmod n$ and stores each count in a table, then sums the values in the table for each $x_i + x_n \pmod n$. Now, given these two algorithms, we make the following conjecture:

\begin{conjecture} \label{con:ts2}
    Given an algorithm $\mathcal A$ that computes $\operatorname{Count3}(x_1, x_2, \mydots, x_n)$, at least one of the following must hold true:
    \begin{enumerate}[label=(\roman*)]
        \item  $\mathcal A$ requires $\Omega(n^2)$ time with access to $x_n$\label{enum:t}
        \item  $\mathcal A$ requires $\Omega(n)$ space storing values unique to $n$. \label{enum:s}
    \end{enumerate}
\end{conjecture}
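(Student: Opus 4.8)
The plan is to recast an arbitrary algorithm $\mathcal{A}$ for $f_{\text{TC}}$ in a two-phase model that mirrors the encoder/decoder dichotomy, and then to prove a time--space trade-off lower bound in that model. In the \emph{precomputation phase}, $\mathcal{A}$ reads $x_1,\dots,x_{n-1}$ as a stream and writes a persistent state $P$; in the \emph{query phase} it reads $x_n$ (and $n$), is allowed random-access re-reads of $x_1,\dots,x_{n-1}$ and of $P$, uses working memory $W$, and outputs $f_{\text{TC}}$ in time $T$. The structural restriction that makes ``values unique to $n$'' meaningful is that $P$ is \emph{$n$-oblivious}: the cell produced after reading $x_i$ is a function of $x_1,\dots,x_i$ and $i$ but not of the ultimate length $n$ --- exactly as a decoder's cached keys and values for position $i$ are the same no matter how long the sequence eventually becomes. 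Under this model a decoder has $|W|=O(1)$, $T=O(n)$, and an $n$-oblivious $P$ of size $O(n)$, whereas an encoder has $T=\Omega(n^2)$ with access to $x_n$. So the conjecture is the claim that $f_{\text{TC}}$ is not computable with $|W|=o(n)$, $n$-oblivious $P$, and $T=o(n^2)$, i.e. that one cannot avoid both \ref{enum:t} and \ref{enum:s}.

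For the lower bound itself, the key reformulation is that $f_{\text{TC}}(x_1,\dots,x_n)$ is, up to trivially correctable boundary terms (the diagonal $i=j$ and the contribution of $x_n$ itself), the coordinate selected by $-x_n \bmod n$ of the cyclic self-convolution of the residue histogram $c_v=|\{i:x_i\equiv v \pmod n\}|$. Materializing $c$ (or the full self-convolution) costs $\Omega(n)$ space and is inherently $n$-specific, since the residues depend on $n$; that is route~\ref{enum:s}. If one refuses that storage, $c$ is never formed, and the query phase must re-solve ``count pairs obeying a modular-sum relation'' over a read-only input using sublinear $n$-specific memory. I would attack this with a Borodin--Cook-style time--space trade-off for branching programs --- of the kind known for sorting and element distinctness --- to conclude $T\cdot(|W|+1)=\widetilde{\Omega}(n^2)$; folding the $n$-oblivious $P$ into the read-only input is legitimate because an $n$-oblivious precomputation adds no $n$-specific structure (in particular it cannot be the mod-$n$ histogram). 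A more self-contained alternative splits $x_1,\dots,x_{n-1}$ into two halves, charges each boundary crossing of the input pointer during a re-read as a message carrying the contents of $W$, and reduces to a shifted Set-Disjointness instance on the two halves; combining this across re-reads gives the same trade-off form.

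I expect two genuine obstacles. First, formalizing the model --- especially ``storage unique to $n$'' --- so that it is simultaneously faithful to how real Transformers cache computation, strong enough to admit a lower bound, and not so strong that it trivializes the statement; positional embeddings are the delicate point, since position $n-1$ already reveals $n$, so the $n$-obliviousness of $P$ must be argued or imposed with care. Second, the \emph{literal} dichotomy in the statement is probably slightly too strong: a trade-off of the form $T\cdot S=\widetilde{\Omega}(n^2)$ still permits intermediate points such as $S=\sqrt{n}$, $T=n^{3/2}$, so deriving exactly ``$\Omega(n^2)$ time \emph{or} $\Omega(n)$ space'' requires further restricting the model (e.g. $O(1)$ passes or $O(1)$ query-phase space) or exploiting the fact that the query is a single token $x_n$ and the arithmetic is mod $n$ --- and this gap is presumably why the result is posed as a conjecture rather than a theorem. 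A fully rigorous fallback that still suffices for the paper's purpose is to prove only that no algorithm with $O(1)$-space, $o(n^2)$-time query phase and $n$-oblivious precomputation --- hence no decoder --- computes $f_{\text{TC}}$, which the trade-off delivers outright while sidestepping the dichotomy-sharpening.
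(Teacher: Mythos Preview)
The paper does not prove this statement at all: it is explicitly labeled a \emph{conjecture}, and the only justification offered is the sentence ``This conjecture seems plausible given that both \Cref{alg:tc1} and \Cref{alg:tc2}, which we consider to be optimal, adhere to it.'' The conjecture is then used as a hypothesis in the proof of \Cref{lem:conjl2}, and proving it (or disproving it) is listed among the open problems in the conclusion. So there is no ``paper's own proof'' to compare against; your proposal goes substantially beyond anything the paper attempts.

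As for the proposal itself: your two-phase formalization and the reformulation of $f_{\text{TC}}$ as a coordinate of the cyclic self-convolution of the mod-$n$ residue histogram are both sound and genuinely clarify what the conjecture is asking. You also put your finger on the real obstruction, which the paper glosses over entirely: standard branching-program time--space trade-offs yield a product bound $T\cdot S = \widetilde\Omega(n^2)$, not the sharp dichotomy ``$T=\Omega(n^2)$ or $S=\Omega(n)$,'' so the conjecture as literally stated may well be false at intermediate regimes like $T=S=\Theta(n^{3/2})$. Your suggested fallback --- prove only the $O(1)$-query-space, $n$-oblivious-precomputation case, which is exactly what \Cref{lem:conjl2} needs --- is the right move and would strictly strengthen the paper. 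The one place I would push harder is your treatment of the $n$-oblivious precomputed state $P$: you assert that folding it into the read-only input is ``legitimate because an $n$-oblivious precomputation adds no $n$-specific structure,'' but this needs an argument, since $P$ could in principle encode rich combinatorial information about $x_1,\dots,x_{n-1}$ (e.g.\ a sorted list, or histograms modulo many small primes) that, while not mod-$n$-specific, still accelerates the query phase in ways a Borodin--Cook argument over the raw input would not capture.
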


This conjecture seems plausible given that both \Cref{alg:tc1} and \Cref{alg:tc2}, which we consider to be optimal, adhere to it. \Cref{alg:tc1} uses $O(n^2)$ time after accessing $x_n$ (line 4 of \Cref{alg:tc1}) and \Cref{alg:tc2} requires $O(n)$ memory, where the stored values are a function of $n$ (line 4 of \Cref{alg:tc2}). Given this conjecture, we can show the following lemma:

\begin{lemma} 
\label{lem:conjl2} Given that Conjecture \ref{con:ts2} holds and assuming $O(\log n)$ precision, any decoder-only Transformer with fixed embedding dimension $D$ satisfying $\mathcal{D}(x_1,\mydots,x_m)_m = \operatorname{Count3}(x_1,\mydots,x_m)$ for all sequences of length $m \leq n$ must have $L = \Omega(n)$. \footnote{$O(\log n)$ precision is not required for Lemma \ref{lem:conjl2}, but it is included to be consistent with Lemma \ref{lem:enc-bound}.}
\end{lemma}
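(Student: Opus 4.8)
The plan is to argue by contradiction, turning a shallow decoder into an algorithm that violates Conjecture~\ref{con:ts2}. Suppose $\mathcal D$ has fixed width $D$ and depth $L$ and satisfies $\mathcal D(x_1,\dots,x_m)_m=f_{\text{TC}}(x_1,\dots,x_m)$ for every $m\le n$. Its forward pass, implemented carefully, is a concrete algorithm $\mathcal A$ computing $f_{\text{TC}}$; I will bound the two resources that appear in the conjecture and conclude that \ref{enum:s} must fail, so \ref{enum:t} must hold, which pins $L$ from below.

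First I would nail down the cost of evaluating $\mathcal D(x_1,\dots,x_m)_m$. The structural point is causality of the decoder mask: for each layer $\ell$, the layer-$\ell$ embedding at a position $i<m$ depends only on $x_1,\dots,x_i$ and the absolute positional embeddings $p_1,\dots,p_i$ — not on $x_m$, and not on $m$. Hence $\mathcal A$ can compute and cache all keys and values at positions $1,\dots,m-1$ across all $L$ layers \emph{before reading} $x_m$ (the standard KV-cache). After reading $x_m$, the remaining work per layer is: form the query/key/value at position $m$ ($O(D)$ work since $D$ is fixed), run attention of that single query against the $m$ cached keys/values using a linear-memory implementation such as \Cref{alg:linear-memory-attn} ($O(mD)$ time, $O(D)$ space), and apply the feed-forward and normalization blocks ($O(D)$). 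Summing over the $L$ layers — which are processed sequentially and may share $O(D)$ scratch space — $\mathcal A$ uses $O(mDL)$ time \emph{with access to $x_m$}, while the only working memory that depends on $x_m$ or on $m$ is $O(D)$ (the current-token embedding, its projections, and the attention accumulators); the KV-cache persists, depends only on the prefix and absolute positions, and is exactly the ``precomputed'' memory of \Cref{tab:cc}, not memory ``unique to $m$''. Under $O(\log n)$ precision and unit-cost arithmetic on $O(\log n)$-bit words these are $O(mDL)$ operations and $O(D)$ words.

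Now I would apply Conjecture~\ref{con:ts2} to $\mathcal A$ at input length $m=n$. Condition \ref{enum:s} demands $\Omega(n)$ space in values unique to $n$; but $\mathcal A$ uses only $O(D)$ such space, and $D$ is a fixed constant, so \ref{enum:s} is false for all sufficiently large $n$. The conjecture therefore forces \ref{enum:t}: $\mathcal A$ requires $\Omega(n^2)$ time with access to $x_n$. Combined with the $O(nDL)$ upper bound this yields $nDL=\Omega(n^2)$, i.e.\ $L=\Omega(n/D)=\Omega(n)$, which is the claim.

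The step deserving the most care is the resource accounting in the second paragraph: one must justify that the decoder's computation genuinely splits into an $x_n$-free, length-independent part (the KV-cache, which must \emph{not} be charged under \ref{enum:s}) and an $O(D)$ ``additional'' part, and that the post-$x_n$ work is $O(nDL)$ rather than, say, $\Theta(n^2)$. This relies on (a) causality so cached states never need recomputation, (b) a linear-memory attention routine so per-layer additional space is $O(D)$ and not $O(n)$, and (c) sequential layer processing so the depth $L$ does not multiply the additional space. A secondary point is to fix the computational model (unit-cost arithmetic on $O(\log n)$-bit words) so the precision assumption does not inject polylogarithmic factors that would degrade $\Omega(n)$; as the footnote observes, that assumption is inessential here and is carried only for consistency with \Cref{lem:enc-bound}.
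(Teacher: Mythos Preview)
Your proof is correct but takes a different route from the paper's. Both proofs build the same algorithm $\mathcal A$ from the decoder and invoke Conjecture~\ref{con:ts2}; the difference is how the two branches are handled. You show that branch~\ref{enum:s} is \emph{vacuous} for $\mathcal A$: since the KV-cache is determined by the prefix and absolute positions alone (not by $n$ or $x_n$), the only memory ``unique to $n$'' is the $O(D)$ scratch for the last token, which cannot be $\Omega(n)$ once $D$ is fixed. That forces branch~\ref{enum:t}, and the $O(nDL)$ post-$x_n$ time bound yields $L=\Omega(n)$. The paper instead treats both branches as live. Branch~\ref{enum:t} is handled identically to yours. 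For branch~\ref{enum:s}, rather than dismissing it, the paper exploits causality across \emph{all} prefix lengths: the decoder simultaneously computes $f_\text{TC}(x_1,\dots,x_i)$ for every $i\le n$, and if each of these needs $\Omega(i)$ space ``unique to $i$,'' that uniqueness prevents overwriting, so the spaces accumulate to $\Omega\bigl(\sum_i i\bigr)=\Omega(n^2)$; comparing against the $O(nDL)$ total space again gives $L=\Omega(n)$. Your argument is shorter and leans directly on the precomputed/additional split of \Cref{tab:cc}; the paper's argument is somewhat more agnostic about exactly which stored values count as ``unique to $n$,'' at the cost of an extra accumulation step.
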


\begin{proof}
    Let $\mathcal D$ be a decoder with $L$ layers and embedding dimension $D$ satisfying $\mathcal D(x_1,\mydots,x_m)_m = \operatorname{Count3}(x_1,\mydots,x_m)$ for all sequences of length $m \leq n$. We can use $\mathcal D$ as an algorithm to compute $\operatorname{Count3}(x_1,\mydots,x_n)$, by outputting  $\mathcal D(x_1,\mydots,x_n)_n$ on input $(x_1,\mydots,x_n).$ Thus either condition \ref{enum:t} or \ref{enum:s} of Conjecture \ref{con:ts2} must hold when $\mathcal D$ computes the output sequence over $(x_1,\mydots,x_n)$.

    \paragraph{Case 1: \ref{enum:t} is true.} In this case $\mathcal D$ requires $\Omega(n^2)$ time with access to $x_n$, i.e.~$\mathcal D$ uses $\Omega(n^2)$ time for the last token. From \Cref{tab:cc}, we know that $\mathcal D$ uses $O(nLD)$ time for each token. Thus condition \ref{enum:t} is true only if we have that $nLD = \Omega(n^2).$ Since $D$ is fixed, it follows that $L = \Omega(n).$

    \paragraph{Case 2: \ref{enum:s} is true.} In this case, $\mathcal D$ requires $\Omega(n)$ space storing values unique to $n$. Because decoders are causal, we have $\mathcal D(x_1,\mydots,x_n)_i = \operatorname{Count3}(x_1,\mydots,x_i)$ for all $i\in [n]$. Then since we assume \ref{enum:s} is true, computing $\mathcal D(x_1,\mydots,x_n)_i$ requires $\Omega(i)$ space for each $i\in [n]$. Furthermore by the uniqueness assumption of \ref{enum:s}, for $i \neq j$, the values stored when computing $\operatorname{Count3}(x_1,\mydots,x_i)$ are different from the values stored when computing $\operatorname{Count3}(x_1,\mydots,x_j)$. Since decoders are causal, the space used to compute $\mathcal D(x_1,\mydots,x_n)_i$ cannot be overwritten when computing $\mathcal D(x_1,\mydots,x_n)_j$, for $j > i$. Hence, when $\mathcal D$ computes $\operatorname{Count3}(x_1,\mydots,x_n)$, it uses $\Omega(i)$ space to compute $\operatorname{Count3}(x_1,\mydots,x_i)$, for each $i \in [n]$. Thus, $\mathcal D$ uses $\Omega\left(\sum_{i\in [n]} i\right) = \Omega(n^2)$ space to compute $\operatorname{Count3}(x_1,\mydots,x_n)$. From \Cref{tab:cc}, we know that $\mathcal D$ uses $O(nLD)$ space for the entire sequence. Then $nLD = \Omega(n^2)$. Since $D$ is fixed, it follows that $L = \Omega(n).$
    
   Finally, as $L = \Omega(n)$ is a necessary condition for both conditions \ref{enum:t} and \ref{enum:s}, \Cref{lem:conjl2} follows.
\end{proof}

\begin{lemma} 
\label{lem:enc-bound}
Assuming $O(\log n)$ precision, there exists an encoder $\mathcal E$ with $L = O(1)$ and $D = O(1)$ such that $\mathcal E(x_1,\mydots,x_m)_m = \operatorname{Count3}(x_1,\mydots,x_m)$ for all sequences of length $m \leq n$. \footnote{We assume $x_1, ..., x_m < m$.}
\end{lemma}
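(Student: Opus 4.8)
The plan is to explicitly construct a constant-depth, constant-width encoder that simulates Algorithm \ref{alg:tc2}, exploiting the fact that an encoder recomputes full attention from scratch when predicting token $m$, and so has access to all of $x_1,\dots,x_m$ simultaneously. The key observation is that Algorithm \ref{alg:tc2} is really a two-pass histogram computation, and histograms/counting are exactly the kind of operation that a single softmax-attention head can approximate: attention with appropriately sharp (large-temperature) logits acts as a "soft equality indicator," and averaging such indicators over all positions yields (a scaled version of) a count. Since we only need $O(\log m)$ bits of precision and the inputs satisfy $x_i < m$, all the relevant quantities — residues mod $m$, counts in $[0,m]$, the final answer in $[0,m)$ — fit in $O(\log m)$ bits, so this is consistent with the precision budget.

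Concretely, I would proceed in the following steps. First, a bottom layer (or an MLP block after the embedding) computes, at each position $i$, the residue $r_i := (-x_i) \bmod m$ and the residue $s_i := (x_i + x_n) \bmod m$; the position of $x_n$ is identified as the last token, which the encoder can detect using positional embeddings (or, position-free, by a standard "attend to the max index" gadget). Mod-$m$ reduction of a number bounded by $2m$ is a single conditional subtraction, implementable by a ReLU MLP. Second, an attention head at the last position $n$ (or, to be safe, at every position, discarding all but the last) computes for each $i$ a soft count $c_i \approx |\{\,j : r_j = s_i\,\}|$ by having the query at "slot $i$" match key $s_i$ against keys $r_j$ with large logit scaling so that the softmax concentrates on exact matches; multiplying the resulting average by $n$ (a scalar read off from the position embedding or from a uniform-attention head that computes $n$) recovers the integer count. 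Third, another attention (or pooling) layer sums $c_i$ over $i \in [n]$ — again uniform attention times $n$ — and a final MLP reduces the total mod $n$ to output $f_{\text{TC}}(x_1,\dots,x_n)$. Each of these is a constant number of layers with width independent of $m$, giving $L = O(1)$, $D = O(1)$.

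The main obstacle is making the "soft count via softmax" argument rigorous under the $O(\log m)$-precision constraint: a single softmax over up to $m$ keys, where some match and some do not, produces weights that are not exactly $1/(\#\text{matches})$ on matches and $0$ elsewhere — there is leakage of size roughly $m \cdot e^{-\Delta}$ where $\Delta$ is the logit gap, and with only $O(\log m)$ bits one must check that a logit scaling of $\Theta(\log m)$ (hence representable) drives this error below $1/2$ so that rounding recovers the exact count. I would handle this by working with integer-valued residues spaced at least $1$ apart, scaling logits by $c\log m$ for a suitable constant $c$, and bounding the total off-diagonal softmax mass by a geometric-series estimate; a final rounding MLP then cleans up. A secondary subtlety is arranging the two different "roles" of each position (as an $i$-index contributing $s_i$ and as a $j$-index contributing $r_j$) within one attention computation — this is solved by standard techniques of packing both $r_i$ and $s_i$ into the residual stream and using separate query/key subspaces, or equivalently by using two heads. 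None of these steps requires anything beyond a single encoder forward pass, which is precisely the resource a decoder of constant depth lacks by Lemma \ref{lem:conjl2}.
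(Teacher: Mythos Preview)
Your proposal follows essentially the same strategy as the paper: both simulate Algorithm~\ref{alg:tc2} by computing the residues $(-x_i)\bmod m$ and $(x_i+x_m)\bmod m$ at every position via a ReLU conditional subtraction, then counting residue matches with attention, then reducing the total modulo $m$. The presentational difference is that the paper works at the RASP level (Algorithm~\ref{alg:rasp_count_triplets}) and invokes the RASP-to-Transformer compilation of \cite{rasp}, so its proof burden reduces to showing that the single nonlinear elementwise map --- the bounded-range mod function $g(a,b)=a$ if $a<b$ else $a-b$ --- is realizable by a constant-depth ReLU MLP, for which it gives an explicit two-ReLU formula. You instead build the attention layers by hand and spend your effort on the softmax-sharpness / $O(\log m)$-precision bookkeeping, which the paper sweeps into the RASP compilation. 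Both are valid; the RASP route is shorter, yours is more self-contained.

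One step you gloss over is a real gap, however: the final ``MLP reduces the total mod $n$'' is not innocuous. The total count can be as large as $m^2$, so reducing it modulo $m$ is \emph{not} a single conditional subtraction, and a constant-depth ReLU MLP of constant width cannot compute $a\bmod m$ for $a\in[0,m^2]$ and variable $m$. The paper handles this with a trick you do not mention: it writes $c - i\cdot m$ at position $i$ (so the $m$ positions jointly enumerate all candidate multiples to subtract), and then a final attention selects the unique position whose value lands in $[0,m)$. Your outline needs either this device or an equivalent way to distribute the $\Theta(m)$-step mod reduction across the $m$ positions; once that is patched, the argument goes through.
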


Proof of Lemma \ref{lem:enc-bound} is in \Cref{sec:enc-bound-proof}.

\begin{remark}With linear chain-of-thought (generating $O(n)$ tokens before answering), a decoder would be able to perform $\operatorname{Count3}$. We provide a RASP\footnote{RASP \citep{rasp} is a programming language that describes Transformer computations, by mapping attention and feed-forward computation into simple primitives.} \citep{rasp} program, \Cref{alg:rasp_count_triplets_cot}, to demonstrate this.\end{remark} 

\begin{figure}[t]
    \centering
    \includegraphics[width=1.0\linewidth]{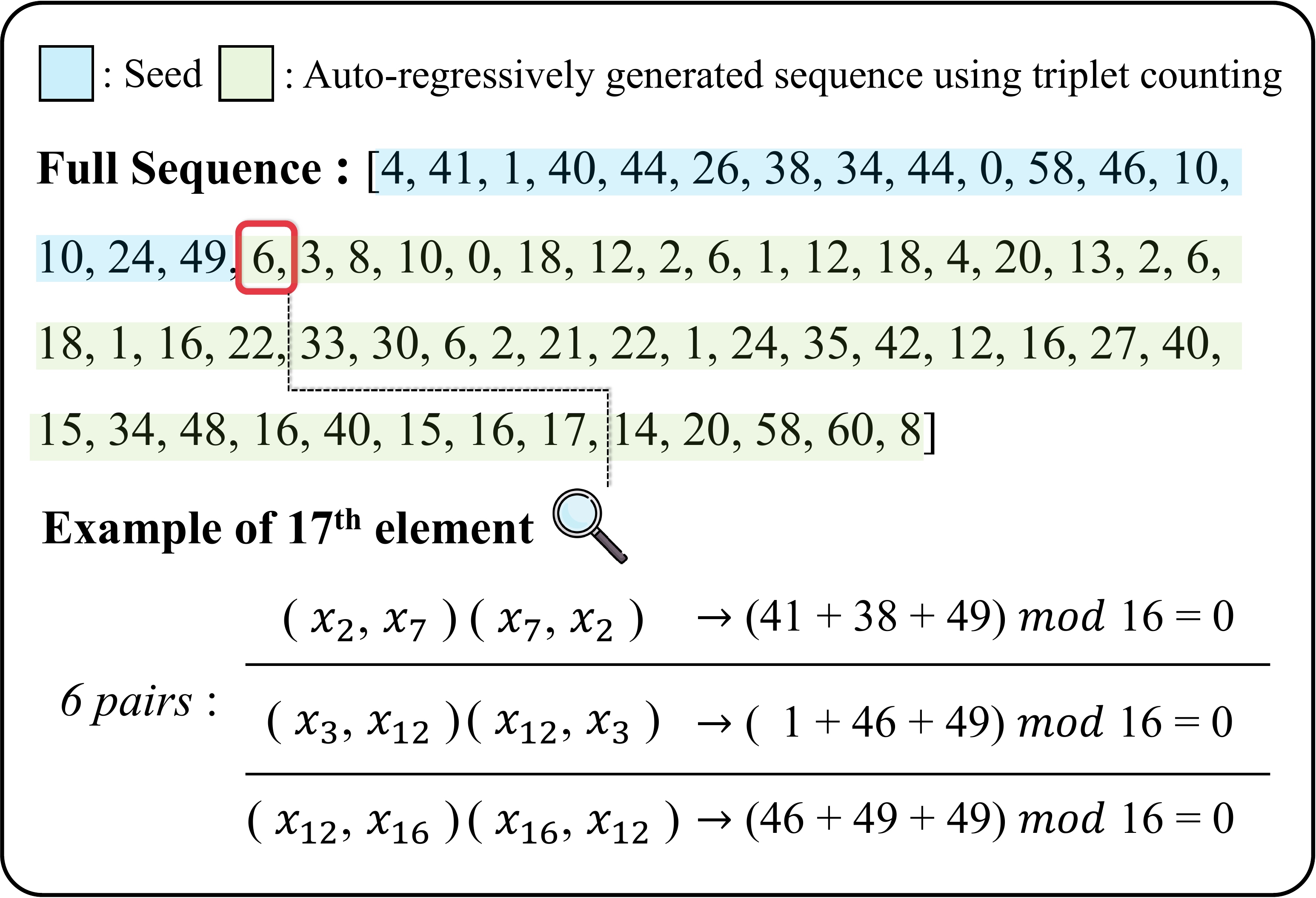}
    \vspace{-1.0em}
    \caption{An example of a sequence used in a $\operatorname{Count3}$ experiment.}
    \label{fig:sequence_for_triplet}
    \vspace{-1.0em}
\end{figure}

\subsection{$\operatorname{Count3}$ Experiments} \label{sec:triplet_small}
We train small decoder-only and encoder-only Transformers on auto-regressive sequences generated from \Cref{eq:ftrip}.  
To generate unique sequences, we start each sequence with a seed containing 16 random integers between 0 and 63. Then we extend the sequence to 64 integers using \Cref{eq:ftrip} (see~\Cref{fig:sequence_for_triplet}). 
Seeds are generated randomly during training. 
The seed portion of the sequence was not used to compute the loss during training, so the model was only trained on the deterministic part of the sequence. 

As shown in \Cref{fig:trip_loss}, the decoder-only Transformers demonstrate some ability to learn patterns related to the distribution of numbers in $\operatorname{Count3}$ sequences, but they completely fail to learn the task. In contrast, the encoder successfully learns the task with near-perfect accuracy.

We also evaluated Prefix decoder-only models~\citep{T5, wu2021yuan}, which perform non-causal attention for the prefix portion of the sequence. We used the same experimental setup, and set the 16-integer seed as the prefix. As shown in Figure~\ref{fig:trip_loss}, while the Prefix decoder-only model slightly outperforms the decoder-only model, it also fails to learn the triplet counting task. This suggests that incorporating full attention over parts of the sequence in a decoder-only model is insufficient for solving tasks with $\operatorname{Count3}$-level complexity.

To demonstrate that ENTP is effective for larger pre-trained models, we fine-tuned BERT~\citep{devlin2019bertpretrainingdeepbidirectional}\footnote{BERT is an encoder-only Transformer trained using the masked language modeling objective.} using the ENTP approach under the same experimental conditions. As shown in Figure~\ref{fig:trip_loss}, BERT combined with ENTP successfully learned triplet counting. Notably, as BERT is pre-trained and larger compared to the medium transformer, it converged more quickly and achieves higher accuracy.

\begin{figure*}[ht]
    \centering
    \includegraphics[width=.8\linewidth]{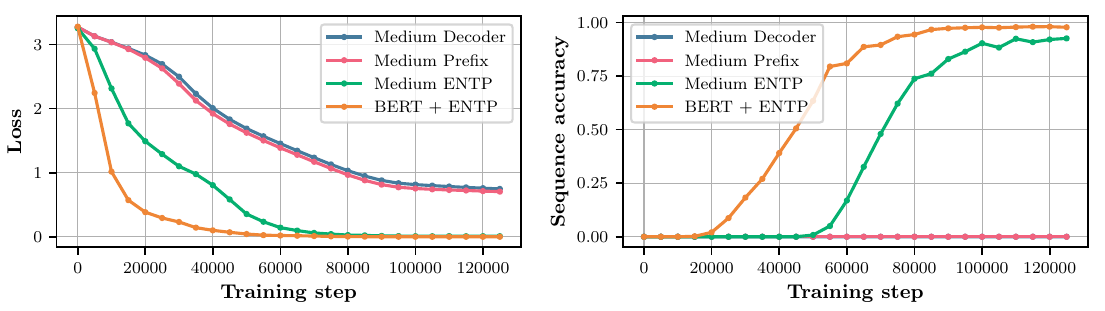}
    \vspace{-0.5em}
   \caption{\textbf{Training loss (left) and sequence accuracy curve (right) for the $\operatorname{Count3}$.} ENTP successfully learns to perform the $\operatorname{Count3}$ task, but the decoder-only Transformers and prefix Transformers struggle to learn it.}
    \label{fig:trip_loss}
    \vspace{-0.5em}
\end{figure*}

To investigate the performance of decoder-only large language models (LLMs) on the $\operatorname{Count3}$ task, we fine-tune Llama-3~\citep{dubey2024llama3herdmodels} and GPT-4o~\citep{openai2024gpt4technicalreport} using sequences of 64 integers introduced previously. 
To enable the LLMs to leverage their knowledge, we also include the code for Algorithm~\ref{alg:tc1} in the prompt, asking the models to provide the result after executing the code (see Table~\ref{table:llm_prompt_code} for the full prompt). As shown in Figure~\ref{fig:llm_finetuning}, the LLMs struggle with the $\operatorname{Count3}$ task, which is consistent with our small-scale experiment. It demonstrates that the suggested characteristics of causal decoder-only models hold true even at large scales. We provide the validation of the prompt design used, as well as details about the LLM fine-tuning, in the Appendix~\ref{sec:appendix_llm}.

\begin{figure*}[htbp]
    \centering
    \includegraphics[width=.9\linewidth]{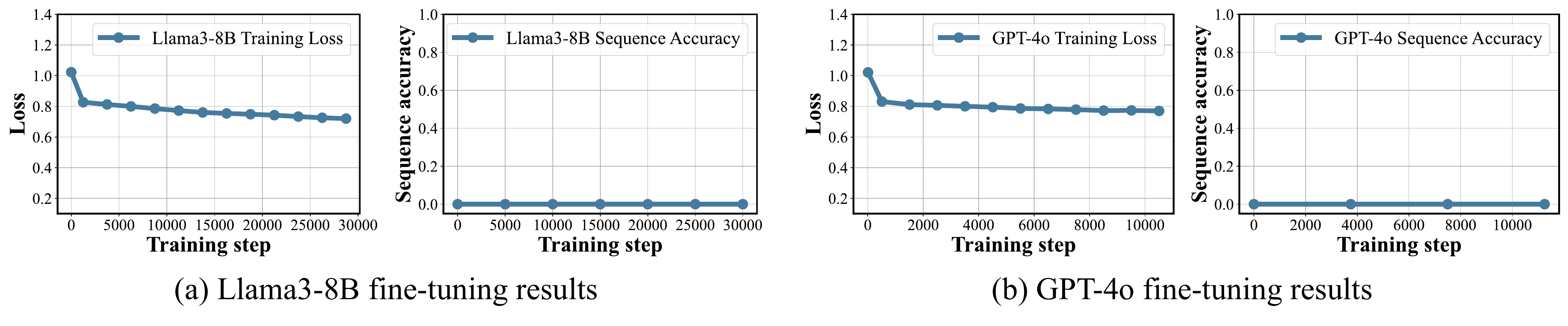}
    \vspace{-1.0em}
    \caption{Results of LLM fine-tuning on $\operatorname{Count3}$.}
    \label{fig:llm_finetuning}
\end{figure*}

Finally, we remark that $\operatorname{Count3}$ sequences have low Kolmogorov complexity, meaning the smallest program generating them is short---upper bounded by the six-line \Cref{alg:tc1}. However, as shown in \Cref{lem:conjl2} and the experiments above, no decoder-only Transformer, regardless of size, can efficiently learn the task.

\begin{highlight}
    \paragraph{Finding 2:} 
    \emph{ENTP can easily learn $\operatorname{Count3}$, but large decoder-only Transformers cannot.}
\end{highlight}
\subsection{Similar Function Learnable by Decoder}
Motivated by the question of how we need to change $\operatorname{Count3}$ so that it can be learned by a decoder, we examine a modified version of $\operatorname{Match3}$ \citep{tf-limitations}. 
\begin{gather*}\label{eq:TI}
    \operatorname{Match3}'(x_1,x_2,\mydots,x_n) \coloneqq \\ \begin{cases}
        1 & \exists \ (i, j) : x_1 + x_i + x_j = 0 \pmod{128}\\
        0 & \text{otherwise}\\
    \end{cases}
\end{gather*}
There are several key differences between $\operatorname{Count3}$ and $\operatorname{Match3}'$:
\begin{enumerate}[label=(\roman*)]
    \item $\operatorname{Match3}'$ uses a fixed modulus, whereas $\operatorname{Count3}$ employs the sequence length as the modulus. This simplifies the decoder's task, as the modulus remains constant across all tokens, enabling reuse of intermediate values from previous tokens.
    
    \item $\operatorname{Match3}'$ operates on triplets $(x_1, x_i, x_j$) rather than $(x_i, x_j, x_n$). By using $x_1$ instead of $x_n$, it becomes easier for the decoder since $x_1$ remains unchanged for different tokens, facilitating the reuse of intermediate values across tokens.
    
    \item $\operatorname{Match3}'$ checks for the existence of a condition rather than counting occurrences. Counting is challenging to implement within the attention mechanism without scaling values by the sequence length. Due to the causal mask, scaling value vectors by sequence length is not straightforward.
\end{enumerate}

We provide RASP \citep{rasp} program, \Cref{alg:rasp_has_triplet} that satisfies $\mathcal D(x_1, x_2, ..., x_n)_n = \operatorname{Match3}'(x_1, x_2, ..., x_n)$ for sequences of any length, assuming $O(\log n)$ precision. We train small Transformers to verify that both decoders and encoders can perform $\operatorname{Match3}'$, and find that both models can perform $\operatorname{Match3}'$ with high accuracy.

\setlength{\tabcolsep}{4.5pt}
\renewcommand{\arraystretch}{1.4} 
\begin{table}[ht]
    \centering
    \caption{$\operatorname{Match3}'$ performance.}
    \resizebox{1.0\linewidth}{!}{
       \begin{tabular}{c|ccc}
            \toprule
             {Model}
             & Min Loss & Token Accuracy  & Full Sequence Accuracy
             \\
            \midrule
            Medium Decoder (6 layer)  & \textbf{0.0001}  & \textbf{99.99}\% & \textbf{99.92}\% \\
            Medium Encoder (6 layer) & 0.0016 & 99.97\% & 99.50\% \\
            \bottomrule
        \end{tabular}
    }
    \vspace{-1.0em}
\end{table}

\section{Experimental Results for Small-Scale Language Modeling Tasks}
Scaling up ENTP to large models remains challenging, but it can still be tested effectively at limited scales on simple tasks. To assess its capabilities, we train ENTP models on small yet representative language modeling benchmarks. These tasks include arithmetic reasoning~\citep{leeteaching, mcleish2024transformersarithmeticrightembeddings, rasp-len-gen}, in-context learning with synthetic data~\citep{icl, icl_stats, ding2024causallm}, and language modeling on small datasets~\citep{tinybenchmarks, CLUTRR}. By focusing on these controlled experiments, we can evaluate ENTP’s potential while keeping computational demands manageable.

\subsection{Generalization on Addition}

We test the sample complexity and length generalization capabilities of decoders and encoders using addition tasks. We use the reversed addition format (\texttt{\$123+456=975\$}) from \citet{leeteaching}. We find that encoders exhibit lower sample complexity compared to decoders, as seen in \Cref{fig:reversed_addition_small}, meaning they require fewer training examples to achieve similar performance. Additionally, encoders demonstrate superior ability to generalize to longer sequences, as shown in \Cref{fig:len_gen_addition}. We provide more experimental details and results in \Cref{sup:addition}.

\begin{figure*}[htbp]
    \centering
    \begin{minipage}{0.49\linewidth}
        \centering
        \includegraphics[width=0.8\linewidth]{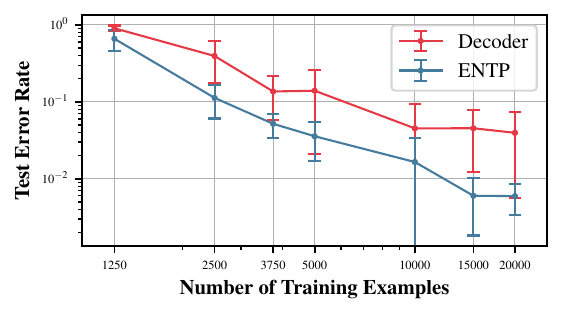}
        \vspace{-1em}
        \caption{\textbf{Addition Sample Complexity.} The train and test datasets include numbers with up to 3 digits.\ }
        \label{fig:reversed_addition_small}
    \end{minipage}
    \hfill
    \begin{minipage}{0.49\linewidth}
        \centering
        \includegraphics[width=0.8\linewidth]{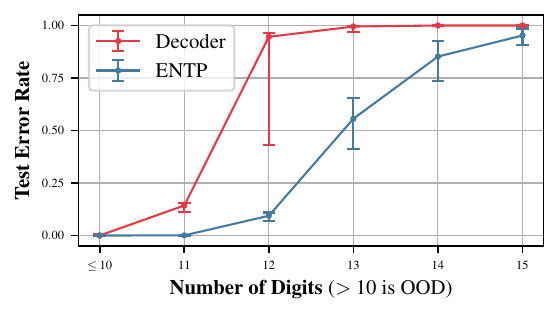}
        \vspace{-1em}
        \caption{\textbf{Addition Length Generalization.} The train set has numbers up to 10 digits, and the test set has up to 15.}
        \label{fig:len_gen_addition}
    \end{minipage}
\end{figure*}

\begin{highlight}
    \paragraph{Finding 3:} 
    \emph{ENTP achieves superior generalization compared to decoder-only Transformers on both in-distribution and out-of-distribution data.}
\end{highlight}
 
\subsection{In-Context Learning}

\begin{figure}[t]
    \centering
    \includegraphics[width=0.95\linewidth]{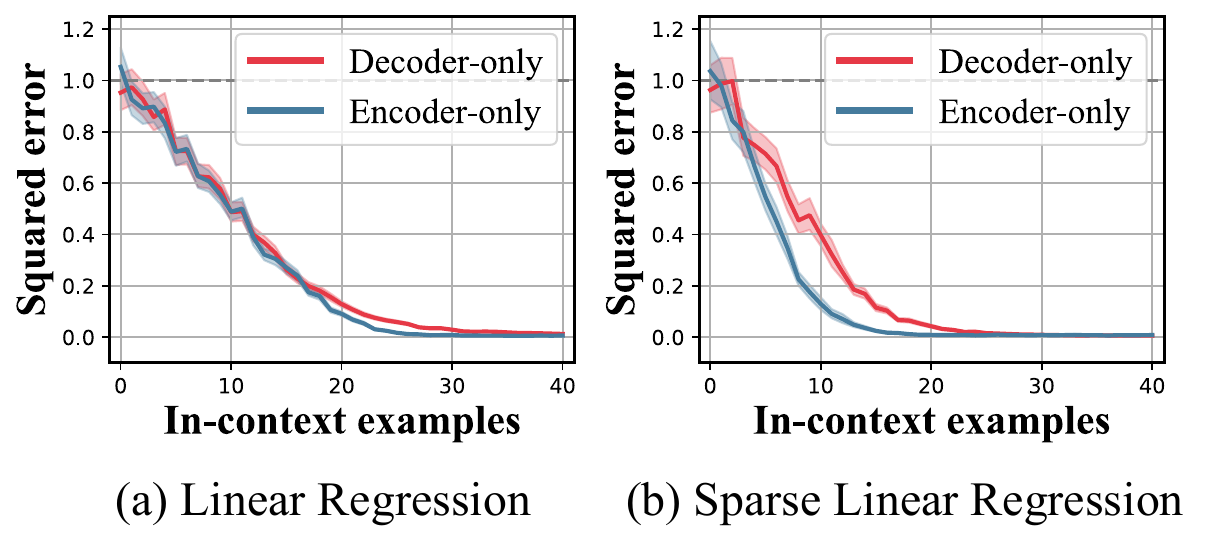}
    \vspace{-0.5em}
    \caption{
    \textbf{Results of in-context learning experiment.} The encoder-only models demonstrate superior performance across both function classes compared to the decoder-only models. 
    }
    \label{fig:icl_result}
    \vspace{-0.5em}
\end{figure}

We consider the problem of learning a function class $\mathcal{F}$ using in-context example~\citep{icl}. 
 In this problem, a function $f \in \mathcal{F}$ is sampled from a distribution $\mathcal{D_F}$, and a sequence of random inputs is sampled i.i.d. from $\mathcal{D_X}$, forming a prompt $P$ : $\left(x_1, f\left(x_1\right), x_2, f\left(x_2\right), \mydots, x_N, f\left(x_N\right)\right)$. 
The objective is for the model to in-context learn the function 
$f$ from the prompt $P$ and predict $f(x_{\text{query}})$ for a new input $x_\text{query}$. 

We examine two types of function classes: linear function and sparse linear function. For both function classes, we sample $x_i$ from a Gaussian distribution and utilize the squared error as loss function. 

We present the in-context learning results according to the number of in-context examples for each function class in Figure~\ref{fig:icl_result}.
ENTP demonstrates superior performance compared to the decoder-only models in in-context learning for both function classes. 
A detailed description of each function class, along with additional experimental results for the non-linear function classes, is provided in the Appendix~\ref{sec:appendix_icl}.

\begin{highlight}
    \paragraph{Finding 4:} 
    \emph{For in-context learning, ENTP shows competitive performance compared to decoders.}
\end{highlight}

\subsection{Natural Language Tasks}\label{sec:openwebtext}

We train Transformer models on the OpenWebText dataset \citep{Gokaslan2019OpenWeb}, an open-source replication of WebText used for GPT-2 training \citep{Radford2019LanguageMA}, using a next-token prediction objective. We use medium models, described in Table~\ref{tbl:model_sizes}, and hyperparameters from \Cref{tab:owt_hyperparameters}. As shown in \Cref{tbl:owt}, the encoder-only Transformer slightly outperforms the decoder-only Transformer.

To assess commonsense reasoning, we use the TinyWinoGrande benchmark~\citep{tinybenchmarks}, which tests pronoun resolution. After pretraining on OpenWebText, both models undergo zero-shot evaluation on this benchmark. 

We further evaluate the models on an NLP classification task using CLUTRR~\citep{CLUTRR}, which requires identifying familial relationships from text. After fine-tuning on CLUTRR, both models are tested on a holdout set. We randomly select 10,000 examples for the training set and conduct each experiment using three different random seeds.

As shown in \Cref{tab:downstream-results}, ENTP achieves higher accuracy than the decoder-only model on both the TinyWinoGrande and CLUTRR tasks, demonstrating its potential for diverse natural language tasks.

\begin{table}[ht]
    \centering
    \caption{Minimum values of training and validation loss, as well as perplexity, for decoder-only and encoder-only Transformers on the OpenWebText dataset.}
    \vspace{-0.5em}
    \resizebox{1.0\linewidth}{!}{
       \begin{tabular}{c|cccc}
            \toprule
            Model & Train Loss & Validation Loss & Train Perplexity & Validation Perplexity \\
            \midrule
            Decoder-only  & 4.694 & 4.705 & 109.3 & 110.5 \\
            Encoder-only  & \textbf{4.643} & \textbf{4.650} & \textbf{103.9} & \textbf{104.6} \\
            \bottomrule
        \end{tabular}
    }
    \label{tbl:owt}
\end{table}

\setlength{\tabcolsep}{12.5pt}
\renewcommand{\arraystretch}{1.2}
\begin{table}[ht]
    \centering
    \caption{The performance of decoder-only model and ENTP on the TinyWinoGrande and CLUTRR benchmarks.}
    \vspace{-0.5em}
    \resizebox{1.0\linewidth}{!}{
    \begin{tabular}{c|cc}
    \toprule
    Benchmark & Decoder-only & Encoder-only (ENTP) \\
    \midrule
    TinyWinoGrande Error Rate & 44.0 \%	& \textbf{39.0} \% \\
    CLUTRR Error Rate & $0.9 \pm 0.12$ \%	& $\mathbf{0.5 \pm 0.03}$ \% \\
    \bottomrule
    \end{tabular}
    }
    \label{tab:downstream-results}
    \vspace{-0.5em}
\end{table}

\begin{highlight}
    \paragraph{Finding 5:} 
    \emph{ENTP performs better than decoder-only models on next-token prediction based language modeling, leading to superior performance on downstream natural language tasks.}
\end{highlight}

\section{Discussion}
In this work, we present theoretical and novel experimental results suggesting that, assuming compute is unlimited, decoder-only Transformers are not the ideal model for sequence modeling. We show that ENTP is more expressive without compromising generalization. Using \Cref{thm:dyen} and \ref{thm:eydn}, we find that the classes of functions encoder-only and decoder-only Transformers can exactly learn are different. We introduce the $\operatorname{Count3}$ task and demonstrate, both theoretically and experimentally, that while ENTP can perform it easily, decoders cannot. We also find that encoders outperform decoders on various auto-regressive tasks, including length generalization and in-context learning.

Despite its advantages, ENTP is currently computationally inefficient for practical deployment. However, it is a valuable architecture for further study, as it can provide deeper insights into the fundamental workings of Transformers. By understanding why ENTP exhibits superior expressiveness and generalization, we may be able to leverage these insights to develop improved next-generation architectures for language models.

A promising future direction is developing a compute-efficient ENTP variant that retains its strengths while narrowing the efficiency gap with decoders. Achieving this balance could lead to models that surpass existing architectures in both performance and scalability, making it a valuable area for further research.


\newpage
\clearpage

\section*{Impact Statement}
This paper presents work whose goal is to advance the field of Machine Learning. There are many potential societal consequences of our work, none which we feel must be specifically highlighted here.

\section*{Acknowledgements}
The work of Kangwook Lee is supported in part by NSF CAREER Award CCF-2339978, Amazon Research Award, and a grant from FuriosaAI. 
Daewon Chae was supported by Hyundai Motor Chung Mong-Koo Foundation.

\bibliography{icml_main}

\begin{thebibliography}{32}
\providecommand{\natexlab}[1]{#1}
\providecommand{\url}[1]{\texttt{#1}}
\expandafter\ifx\csname urlstyle\endcsname\relax
  \providecommand{\doi}[1]{doi: #1}\else
  \providecommand{\doi}{doi: \begingroup \urlstyle{rm}\Url}\fi

\bibitem[Achiam et~al.(2023)Achiam, Adler, Agarwal, Ahmad, Akkaya, Aleman, Almeida, Altenschmidt, Altman, Anadkat, et~al.]{openai2024gpt4technicalreport}
Achiam, J., Adler, S., Agarwal, S., Ahmad, L., Akkaya, I., Aleman, F.~L., Almeida, D., Altenschmidt, J., Altman, S., Anadkat, S., et~al.
\newblock Gpt-4 technical report.
\newblock \emph{arXiv preprint arXiv:2303.08774}, 2023.

\bibitem[Bai et~al.(2023)Bai, Chen, Wang, Xiong, and Mei]{icl_stats}
Bai, Y., Chen, F., Wang, H., Xiong, C., and Mei, S.
\newblock Transformers as statisticians: Provable in-context learning with in-context algorithm selection.
\newblock In \emph{Thirty-seventh Conference on Neural Information Processing Systems}, 2023.
\newblock URL \url{https://openreview.net/forum?id=liMSqUuVg9}.

\bibitem[Brown et~al.(2020)Brown, Mann, Ryder, Subbiah, Kaplan, Dhariwal, Neelakantan, Shyam, Sastry, Askell, Agarwal, Herbert-Voss, Krueger, Henighan, Child, Ramesh, Ziegler, Wu, Winter, Hesse, Chen, Sigler, Litwin, Gray, Chess, Clark, Berner, McCandlish, Radford, Sutskever, and Amodei]{gpt3}
Brown, T., Mann, B., Ryder, N., Subbiah, M., Kaplan, J.~D., Dhariwal, P., Neelakantan, A., Shyam, P., Sastry, G., Askell, A., Agarwal, S., Herbert-Voss, A., Krueger, G., Henighan, T., Child, R., Ramesh, A., Ziegler, D., Wu, J., Winter, C., Hesse, C., Chen, M., Sigler, E., Litwin, M., Gray, S., Chess, B., Clark, J., Berner, C., McCandlish, S., Radford, A., Sutskever, I., and Amodei, D.
\newblock Language models are few-shot learners.
\newblock In \emph{Advances in Neural Information Processing Systems}, volume~33, pp.\  1877--1901. Curran Associates, Inc., 2020.

\bibitem[Chowdhery et~al.(2023)Chowdhery, Narang, Devlin, Bosma, Mishra, Roberts, Barham, Chung, Sutton, Gehrmann, et~al.]{palm}
Chowdhery, A., Narang, S., Devlin, J., Bosma, M., Mishra, G., Roberts, A., Barham, P., Chung, H.~W., Sutton, C., Gehrmann, S., et~al.
\newblock Palm: Scaling language modeling with pathways.
\newblock \emph{Journal of Machine Learning Research}, 24\penalty0 (240):\penalty0 1--113, 2023.

\bibitem[Chung et~al.(2024)Chung, Hou, Longpre, Zoph, Tay, Fedus, Li, Wang, Dehghani, Brahma, et~al.]{chung2024scaling}
Chung, H.~W., Hou, L., Longpre, S., Zoph, B., Tay, Y., Fedus, W., Li, Y., Wang, X., Dehghani, M., Brahma, S., et~al.
\newblock Scaling instruction-finetuned language models.
\newblock \emph{Journal of Machine Learning Research}, 25\penalty0 (70):\penalty0 1--53, 2024.

\bibitem[Devlin et~al.(2019)Devlin, Chang, Lee, and Toutanova]{devlin2019bertpretrainingdeepbidirectional}
Devlin, J., Chang, M.-W., Lee, K., and Toutanova, K.
\newblock Bert: Pre-training of deep bidirectional transformers for language understanding, 2019.
\newblock URL \url{https://arxiv.org/abs/1810.04805}.

\bibitem[Ding et~al.(2024)Ding, Levinboim, Wu, Goodman, and Soricut]{ding2024causallm}
Ding, N., Levinboim, T., Wu, J., Goodman, S., and Soricut, R.
\newblock Causal{LM} is not optimal for in-context learning.
\newblock In \emph{The Twelfth International Conference on Learning Representations}, 2024.

\bibitem[Dubey et~al.(2024)Dubey, Jauhri, Pandey, Kadian, Al-Dahle, Letman, Mathur, Schelten, Yang, Fan, et~al.]{dubey2024llama3herdmodels}
Dubey, A., Jauhri, A., Pandey, A., Kadian, A., Al-Dahle, A., Letman, A., Mathur, A., Schelten, A., Yang, A., Fan, A., et~al.
\newblock The llama 3 herd of models.
\newblock \emph{arXiv preprint arXiv:2407.21783}, 2024.

\bibitem[Garg et~al.(2022)Garg, Tsipras, Liang, and Valiant]{icl}
Garg, S., Tsipras, D., Liang, P., and Valiant, G.
\newblock What can transformers learn in-context? a case study of simple function classes.
\newblock In \emph{Advances in Neural Information Processing Systems}, 2022.

\bibitem[Giannou et~al.(2023)Giannou, Rajput, Sohn, Lee, Lee, and Papailiopoulos]{pmlr-v202-giannou23a}
Giannou, A., Rajput, S., Sohn, J.-Y., Lee, K., Lee, J.~D., and Papailiopoulos, D.
\newblock Looped transformers as programmable computers.
\newblock In Krause, A., Brunskill, E., Cho, K., Engelhardt, B., Sabato, S., and Scarlett, J. (eds.), \emph{Proceedings of the 40th International Conference on Machine Learning}, volume 202 of \emph{Proceedings of Machine Learning Research}, pp.\  11398--11442. PMLR, 23--29 Jul 2023.
\newblock URL \url{https://proceedings.mlr.press/v202/giannou23a.html}.

\bibitem[Gokaslan et~al.(2019)Gokaslan, Cohen, Pavlick, and Tellex]{Gokaslan2019OpenWeb}
Gokaslan, A., Cohen, V., Pavlick, E., and Tellex, S.
\newblock Openwebtext corpus.
\newblock \url{http://Skylion007.github.io/OpenWebTextCorpus}, 2019.

\bibitem[Hu et~al.(2022)Hu, yelong shen, Wallis, Allen-Zhu, Li, Wang, Wang, and Chen]{hu2022lora}
Hu, E.~J., yelong shen, Wallis, P., Allen-Zhu, Z., Li, Y., Wang, S., Wang, L., and Chen, W.
\newblock Lo{RA}: Low-rank adaptation of large language models.
\newblock In \emph{International Conference on Learning Representations}, 2022.

\bibitem[Lee et~al.(2023)Lee, Sreenivasan, Lee, Lee, and Papailiopoulos]{leeteaching}
Lee, N., Sreenivasan, K., Lee, J.~D., Lee, K., and Papailiopoulos, D.
\newblock Teaching arithmetic to small transformers.
\newblock In \emph{The Twelfth International Conference on Learning Representations}, 2023.

\bibitem[Lewis et~al.(2019)Lewis, Liu, Goyal, Ghazvininejad, rahman Mohamed, Levy, Stoyanov, and Zettlemoyer]{bart}
Lewis, M., Liu, Y., Goyal, N., Ghazvininejad, M., rahman Mohamed, A., Levy, O., Stoyanov, V., and Zettlemoyer, L.
\newblock Bart: Denoising sequence-to-sequence pre-training for natural language generation, translation, and comprehension.
\newblock In \emph{Annual Meeting of the Association for Computational Linguistics}, 2019.

\bibitem[Li et~al.(2024)Li, Liu, Zhou, and Ma]{li2024chain}
Li, Z., Liu, H., Zhou, D., and Ma, T.
\newblock Chain of thought empowers transformers to solve inherently serial problems.
\newblock \emph{arXiv preprint arXiv:2402.12875}, 2024.

\bibitem[McLeish et~al.(2024)McLeish, Bansal, Stein, Jain, Kirchenbauer, Bartoldson, Kailkhura, Bhatele, Geiping, Schwarzschild, and Goldstein]{mcleish2024transformersarithmeticrightembeddings}
McLeish, S., Bansal, A., Stein, A., Jain, N., Kirchenbauer, J., Bartoldson, B.~R., Kailkhura, B., Bhatele, A., Geiping, J., Schwarzschild, A., and Goldstein, T.
\newblock Transformers can do arithmetic with the right embeddings, 2024.
\newblock URL \url{https://arxiv.org/abs/2405.17399}.

\bibitem[Merrill \& Sabharwal(2024)Merrill and Sabharwal]{merrill2024logic}
Merrill, W. and Sabharwal, A.
\newblock A logic for expressing log-precision transformers.
\newblock \emph{Advances in Neural Information Processing Systems}, 36, 2024.

\bibitem[Merrill et~al.(2022)Merrill, Sabharwal, and Smith]{merrill2022saturated}
Merrill, W., Sabharwal, A., and Smith, N.~A.
\newblock Saturated transformers are constant-depth threshold circuits.
\newblock \emph{Transactions of the Association for Computational Linguistics}, 10:\penalty0 843--856, 2022.

\bibitem[Patel et~al.(2023)Patel, Li, Rasooli, Constant, Raffel, and Callison-Burch]{patel2023bidirectionallanguagemodelsfewshot}
Patel, A., Li, B., Rasooli, M.~S., Constant, N., Raffel, C., and Callison-Burch, C.
\newblock Bidirectional language models are also few-shot learners, 2023.
\newblock URL \url{https://arxiv.org/abs/2209.14500}.

\bibitem[P{\'e}rez et~al.(2021)P{\'e}rez, Barcel{\'o}, and Marinkovic]{perez2021attention}
P{\'e}rez, J., Barcel{\'o}, P., and Marinkovic, J.
\newblock Attention is turing-complete.
\newblock \emph{Journal of Machine Learning Research}, 22\penalty0 (75):\penalty0 1--35, 2021.

\bibitem[Polo et~al.(2024)Polo, Weber, Choshen, Sun, Xu, and Yurochkin]{tinybenchmarks}
Polo, F.~M., Weber, L., Choshen, L., Sun, Y., Xu, G., and Yurochkin, M.
\newblock tinybenchmarks: evaluating {LLM}s with fewer examples.
\newblock In \emph{Forty-first International Conference on Machine Learning}, 2024.
\newblock URL \url{https://openreview.net/forum?id=qAml3FpfhG}.

\bibitem[Radford et~al.(2019)Radford, Wu, Child, Luan, Amodei, and Sutskever]{Radford2019LanguageMA}
Radford, A., Wu, J., Child, R., Luan, D., Amodei, D., and Sutskever, I.
\newblock Language models are unsupervised multitask learners.
\newblock 2019.
\newblock URL \url{https://api.semanticscholar.org/CorpusID:160025533}.

\bibitem[Raffel et~al.(2020)Raffel, Shazeer, Roberts, Lee, Narang, Matena, Zhou, Li, and Liu]{T5}
Raffel, C., Shazeer, N., Roberts, A., Lee, K., Narang, S., Matena, M., Zhou, Y., Li, W., and Liu, P.~J.
\newblock Exploring the limits of transfer learning with a unified text-to-text transformer.
\newblock \emph{Journal of machine learning research}, 21\penalty0 (140):\penalty0 1--67, 2020.

\bibitem[Sanford et~al.(2024{\natexlab{a}})Sanford, Hsu, and Telgarsky]{sanford2024one}
Sanford, C., Hsu, D., and Telgarsky, M.
\newblock One-layer transformers fail to solve the induction heads task.
\newblock \emph{arXiv preprint arXiv:2408.14332}, 2024{\natexlab{a}}.

\bibitem[Sanford et~al.(2024{\natexlab{b}})Sanford, Hsu, and Telgarsky]{tf-limitations}
Sanford, C., Hsu, D.~J., and Telgarsky, M.
\newblock Representational strengths and limitations of transformers.
\newblock \emph{Advances in Neural Information Processing Systems}, 36, 2024{\natexlab{b}}.

\bibitem[Sinha et~al.(2019)Sinha, Sodhani, Dong, Pineau, and Hamilton]{CLUTRR}
Sinha, K., Sodhani, S., Dong, J., Pineau, J., and Hamilton, W.~L.
\newblock {CLUTRR:} {A} diagnostic benchmark for inductive reasoning from text.
\newblock \emph{CoRR}, abs/1908.06177, 2019.
\newblock URL \url{http://arxiv.org/abs/1908.06177}.

\bibitem[Vaswani et~al.(2017)Vaswani, Shazeer, Parmar, Uszkoreit, Jones, Gomez, Kaiser, and Polosukhin]{vaswani2023attentionneed}
Vaswani, A., Shazeer, N., Parmar, N., Uszkoreit, J., Jones, L., Gomez, A.~N., Kaiser, L.~u., and Polosukhin, I.
\newblock Attention is all you need.
\newblock In Guyon, I., Luxburg, U.~V., Bengio, S., Wallach, H., Fergus, R., Vishwanathan, S., and Garnett, R. (eds.), \emph{Advances in Neural Information Processing Systems}, volume~30. Curran Associates, Inc., 2017.

\bibitem[Wang et~al.(2022)Wang, Roberts, Hesslow, Le~Scao, Chung, Beltagy, Launay, and Raffel]{zeroshot}
Wang, T., Roberts, A., Hesslow, D., Le~Scao, T., Chung, H.~W., Beltagy, I., Launay, J., and Raffel, C.
\newblock What language model architecture and pretraining objective works best for zero-shot generalization?
\newblock In \emph{International Conference on Machine Learning}, pp.\  22964--22984. PMLR, 2022.

\bibitem[Weiss et~al.(2021)Weiss, Goldberg, and Yahav]{rasp}
Weiss, G., Goldberg, Y., and Yahav, E.
\newblock Thinking like transformers.
\newblock In \emph{International Conference on Machine Learning}, pp.\  11080--11090. PMLR, 2021.

\bibitem[Wu et~al.(2021)Wu, Zhao, Yu, Zhang, Shen, Liu, Li, Zhu, Luo, Xu, et~al.]{wu2021yuan}
Wu, S., Zhao, X., Yu, T., Zhang, R., Shen, C., Liu, H., Li, F., Zhu, H., Luo, J., Xu, L., et~al.
\newblock Yuan 1.0: Large-scale pre-trained language model in zero-shot and few-shot learning.
\newblock \emph{arXiv preprint arXiv:2110.04725}, 2021.

\bibitem[Yun et~al.(2020)Yun, Bhojanapalli, Rawat, Reddi, and Kumar]{yun2020Transformers}
Yun, C., Bhojanapalli, S., Rawat, A.~S., Reddi, S., and Kumar, S.
\newblock Are transformers universal approximators of sequence-to-sequence functions?
\newblock In \emph{International Conference on Learning Representations}, 2020.

\bibitem[Zhou et~al.(2024)Zhou, Bradley, Littwin, Razin, Saremi, Susskind, Bengio, and Nakkiran]{rasp-len-gen}
Zhou, H., Bradley, A., Littwin, E., Razin, N., Saremi, O., Susskind, J.~M., Bengio, S., and Nakkiran, P.
\newblock What algorithms can transformers learn? a study in length generalization.
\newblock In \emph{The Twelfth International Conference on Learning Representations}, 2024.

\end{thebibliography}
\bibliographystyle{icml2025}

\newpage
\appendix
\onecolumn

\section{Notation and Deferred Proofs}\label{sec:dpf}
\subsection{Notation}

\begin{itemize}
    \setlength\itemsep{0.6em}
    \item $\mathcal E$, $\mathcal D$: Encoder/Decoder with positional embeddings.
    \item $\widetilde{\mathcal E}$, $\widetilde{\mathcal D}$: Encoder/Decoder without positional embeddings (with same model weights as $\mathcal E$, $\mathcal D$) (which we call position free).
    \item $L$: number of Transformer blocks.
    \item $D$: embedding dimension.
    \item $p_i$: The $i$'th positional embedding where $p_i \in \mathbb R^D$.
    \item $\mathbf{W}_{Q/K/V}^i$: The weight matrix of the \textbf Query, \textbf Key and \textbf Value respectively of the $i$'th attention block.
    \item $\mathbf I$: The identity matrix.
    \item $\mathbf 0$: The zero matrix.
    \item $(x_1, x_2, x_3, \dots)$: A sequence of values/tokens.
    \item $\mathcal T_{\mathcal E}(x_1, x_2,x_3, \dots)\coloneqq (\mathcal E(x_1)_1, \mathcal E(x_1,x_2)_2, \mathcal E(x_1, x_2, x_3)_3, \dots)$.
\end{itemize}

For convention, we will use bold capital letters $\mathbf{X}$ to denote matrices, and unbold lowercase letters $x$ to denote vectors.

For \cref{thm:dyen} and \cref{thm:eydn}, our model of Transformers will use single-head attention and assume the dimension of the Query and Key are equal to the embedding dimension $D$. We also omit layer normalization and scaling of attention scores by $1/\sqrt D.$ 

\subsection{$\mathcal T_{\mathcal D} = \mathcal D$} \label{sec:tdd}

\begin{proof}
    The main observation is that the the MLP layers of a decoder are element-wise, and the attention layers of a decoder are causal (i.e.~the contextual embedding of the $i$'th token is computed only using the tokens $j\le i$). We thus have that
    \[\mathcal D(x_1,\dots,x_i,\dots,x_n)_i = \mathcal D(x_1,\dots,x_i)_i\]
    for any $i\in [n].$
    As a result,
    \begin{align*}
        \mathcal D(x_1,\dots,x_i,\dots,x_n)_i &= \mathcal D(x_1,\dots,x_i)_i\\
        &= f_{\mathcal D}(x_1,\dots,x_i)\\
        &=  \mathcal T_{\mathcal D}(x_1,\dots,x_i,\dots,x_n)_i,
    \end{align*}
    for all $i\in[n],$ i.e. $\mathcal D(x_1,\dots,x_n) = \mathcal T_{\mathcal D}(x_1,\dots,x_n).$
\end{proof}

\subsection{One-layer Encoder and Decoder are Equivalent} \label{sec:ede}

\begin{proof}
Let $\mathcal D$ and $\mathcal E$ have the same parameters. Then the query key, and value vectors for the attention layer (denoted $q_i,k_i,v_i$ for each $x_i$ respectively) will be the same for both models.

For one-layer decoder: $\mathcal D(x_1,...,x_n)_i = \operatorname{MLP}\left(\sum_{j=1}^i \operatorname{Softmax}(q_i, k_j)v_j\right)$.

For one-layer encoder: $\mathcal E(x_1,...,x_n)_i = \operatorname{MLP}\left(\sum_{j=1}^n \operatorname{Softmax}(q_i, k_j)v_j\right)$.

If $i = n$, then $\mathcal D(x_1,...,x_n)_i = \mathcal E(x_1,...,x_n)_i$. Thus, $\mathcal T_{\mathcal D}(x_1, x_2, \dots,x_n) = \mathcal T_{\mathcal E}(x_1,x_2,\dots,x_n)$.
\end{proof}

\subsection{Proof of \cref{thm:dyen}}

\begin{proof}
    We first provide a construction for $\widetilde{\mathcal D}.$ For the attention-block of the first two layers, we use the same weight matrices. Namely, we set $\mathbf W^1_K= \mathbf W^2_K = \mathbf W^1_Q= \mathbf W^2_Q = \mathbf 0$ and $\mathbf W^1_V = \mathbf W^2_V = \mathbf{I}$. For every other attention block, we set them to the constant zero function by setting $\mathbf W^i_V = \mathbf 0$ for $i \ge 3$. We similarly set the weights and biases of every MLP block to zero. Thus, in essence $\widetilde{\mathcal D}$ is just two duplicate attention blocks stacked on top of each other with a skip connection after each attention block.

    Now consider three arbitrary vectors $x_1,x_2, x_3 \in \mathbb R^D$ and its corresponding sequence $(x_1, x_2, x_3)$. Let us first compute the output of $\widetilde{\mathcal D}$ on $(x_1, x_2, x_3)$. The first attention block and skip connection will map the input sequence to the sequence 
    \[\left(2x_1, x_2 + \frac{x_1+x_2}{2}, x_3 + \frac{x_1 + x_2 + x_3}{3}\right).\] 
    The second attention block and skip connection will then map to the following:
    \[\left(4x_1,\frac{7x_1 + 9x_2}{4}, \frac{23 x_1 + 17 x_2 + 32 x_3}{18}\right).\]
    Our first observation from this mapping is that there clearly exists $\tilde x_1,\tilde x_2,\tilde x_3 \in \mathbb R^D$ such that $\widetilde{\mathcal D}(\tilde x_1, \tilde x_2, \tilde x_3)_3 \neq \widetilde{\mathcal D}(\tilde x_2, \tilde x_1, \tilde x_3)_3.$ Our second observation is that
    \[4x_1 = \frac{7x_1 + 9x_2}{4} \quad\Longleftrightarrow\quad x_1 = x_2,\]
    which follows from simplifying the left hand equation.

    Now, let us for sake of contradiction assume the existence of some encoder $\mathcal E$ such that $\mathcal T_{\mathcal E}$ exactly replicates $\widetilde{\mathcal D} $ on every input sequence. We first claim that the first two positional embeddings $p_1, p_2$ of $\mathcal E$ must differ. This follows by our first observation and thus the requirement that $\mathcal E (\tilde x_1, \tilde x_2, \tilde x_3)_3 \neq \mathcal E (\tilde x_2, \tilde x_1, \tilde x_3)_3$ --- which can only happen if $p_1 \neq p_2$ due to the permutation invariance of encoders when there are no positional embeddings. Now as $p_1\neq p_2$, there exists vectors $y_1, y_2, c \in \mathbb R^D$ such that $y_1 \neq y_2$ and $y_1 + p_1 = y_2 + p_2 = c$. It follows immediately that
    \begin{align*}
        \mathcal E(y_1)_1 &= \widetilde{\mathcal E}(y_1 + p_1)_1\\
        &=  \widetilde{\mathcal E}(c)_1\\
        &=  \widetilde{\mathcal E}(c,c)_2\\
        &= \widetilde{\mathcal E}(y_1 + p_1, y_2 + p_2)_2\\
        &= \mathcal E(y_1, y_2)_2.
    \end{align*}
    But since $y_1 \neq y_2$, by the second observation we made, it must be that $\widetilde{\mathcal D}(y_1)_1 \neq \widetilde{\mathcal D}(y_1, y_2)_2$. Since we assumed that $\mathcal T_{\mathcal E}$ exactly replicates $\widetilde{\mathcal D}$ on every input sequence, it thus follows that  $\mathcal E(y_1)_1 \neq \mathcal E(y_1, y_2)_2$ --- a contradiction.
    Hence, no such encoder $\mathcal E$ exists, which directly implies that we can always find some sequence  $(x_1, x_2, \dots)$ where $\widetilde{\mathcal D}(x_1, x_2, \dots) \neq \mathcal T_{\mathcal E}(x_1,x_2,\dots)$.
\end{proof}

\subsection{Proof of \cref{thm:eydn}}

\begin{proof}
    We first provide a construction for $\widetilde{\mathcal E}.$ For the attention-block of the first two layers, we use the same weight matrices. Namely, we set $\mathbf W^1_K= \mathbf W^2_K = \mathbf W^1_Q= \mathbf W^2_Q = \mathbf W^1_V = \mathbf W^2_V = \mathbf I$. For every other attention block, we set them to the constant zero function by setting $\mathbf W^i_V = \mathbf 0$ for $i \ge 3$. We similarly set the weights and biases of every MLP block to zero. Thus, in essence $\widetilde{\mathcal E}$ is just two duplicate attention blocks stacked on top of each other with a skip connection after each attention block.

    Now consider two arbitrary vectors $x_1,x_2 \in \mathbb R^D$ and its corresponding sequence $(x_1, x_2)$. A brief inspection will reveal that
    \begin{equation} \label{eq:encres}
        \mathcal T_{\widetilde{\mathcal E}}(x_1,x_2) = (4x_1, \alpha x_1 + \beta x_2),
    \end{equation}
    where $\alpha,\beta > 0.$

    Next, we assume the existence of some $\mathcal D$ where $p_1\neq p_2$ and exactly replicates $\widetilde{\mathcal E}.$ We fix $x_1 = 0$ and let $x_2 = p_1 - p_2.$ Observe that $x_2 \neq 0$ as $p_1 \neq p_2$. It follows that there is some constant vector $c \in \mathbb R^D$ where
    \begin{align*}
        \mathcal D(x_1,x_2) &= \widetilde{\mathcal D}(x_1 + p_1, x_2 + p_2)\\
        &= \widetilde{\mathcal D}(p_1, p_1)\\
        &= (c,c).
    \end{align*}
    Now from \eqref{eq:encres}, we have that $ \mathcal T_{\widetilde{\mathcal E}}(x_1,x_2) = (0, \beta x_2)$ for some $\beta > 0.$ As $x_2 \neq 0$, it follows that  $\beta x_2 \neq 0$ and hence $(0, \beta x_2)$ is not a constant sequence --- contradicting the output of the decoder. Hence, no $\mathcal D$ where $p_1\neq p_2$ can exactly replicates $\widetilde{\mathcal E}.$
\end{proof}

\subsection{Proof of \cref{lem:enc-bound}} \label{sec:enc-bound-proof}
\begin{proof}
Lemma \ref{lem:enc-bound} follows from \Cref{alg:rasp_count_triplets}. From \citet{rasp}, a RASP program can be compiled to a Transformer, with a fixed number of Transformer-layers and attention heads. However, it assumes that the MLPs can perform any element-wise operation. Thus, it suffices to show that each MLP needs $O(1)$ layers with respect to sequence length $n$.

We first observe that the largest internal value possible within \Cref{alg:rasp_count_triplets} is $n^2$, so there are $n^2 + 1$ distinct internal values including 0. Using $\lfloor 2\log_2 n \rfloor + 1$ bits, we can represent all of these values as unsigned integers. Using floating-point numbers we would also need $\Theta(\log n)$ bits.

All linear element-wise operations in \Cref{alg:rasp_count_triplets} can be implemented trivially, since an MLP can perform arbitrary linear transformations. Therefore, we focus on the only nonlinear element-wise function $g: \mathbb [2n - 1] \times [n] \rightarrow [n - 1]$:\footnote{This function implements modular division for a bounded range. $g(a, b) = a \mod b$, when $0 \le a < 2b$.}
$$g(a, b) = \begin{cases}
    a, & a < b \\
    a - b & a \geq b.
\end{cases}$$
Using a constant number of linear operations and $\operatorname{ReLU}$ functions, $g$ can be constructed as follows:
$$g(a, b) = \operatorname{ReLU}(a - M\operatorname{ReLU}(a - b + \epsilon)) + \operatorname{ReLU}(a - b),$$
where $0 < \epsilon < 1$ and $M \ge \frac{2n}{\epsilon}$.

Because $g$ can be implemented using a constant number of linear operations and $\operatorname{ReLU}$ functions, it can be implemented by an MLP with $\operatorname{ReLU}$ activation functions and $O(1)$ depth.

Because of skip-connections, we can concatenate MLPs by zeroing out attention. Then we can create any MLP of $O(1)$ depth.

Thus, it is possible to construct an encoder $\mathcal E$ with $L = O(1)$ and $D = O(1)$ such that $\mathcal E(x_1,\dots,x_m)_m = f_\text{TC}(x_1,\dots,x_m)$ for all sequences of length $m \leq n$.
\end{proof}

\section{Attention patterns of different Transformer architectures}
In~\Cref{fig:attention_pattern}, we provide the visualization of attention patterns of encoder-only, decoder-only, prefix decoder-only, and encoder-only models.

\begin{figure}[t]
    \centering
    \includegraphics[width=0.95\linewidth]{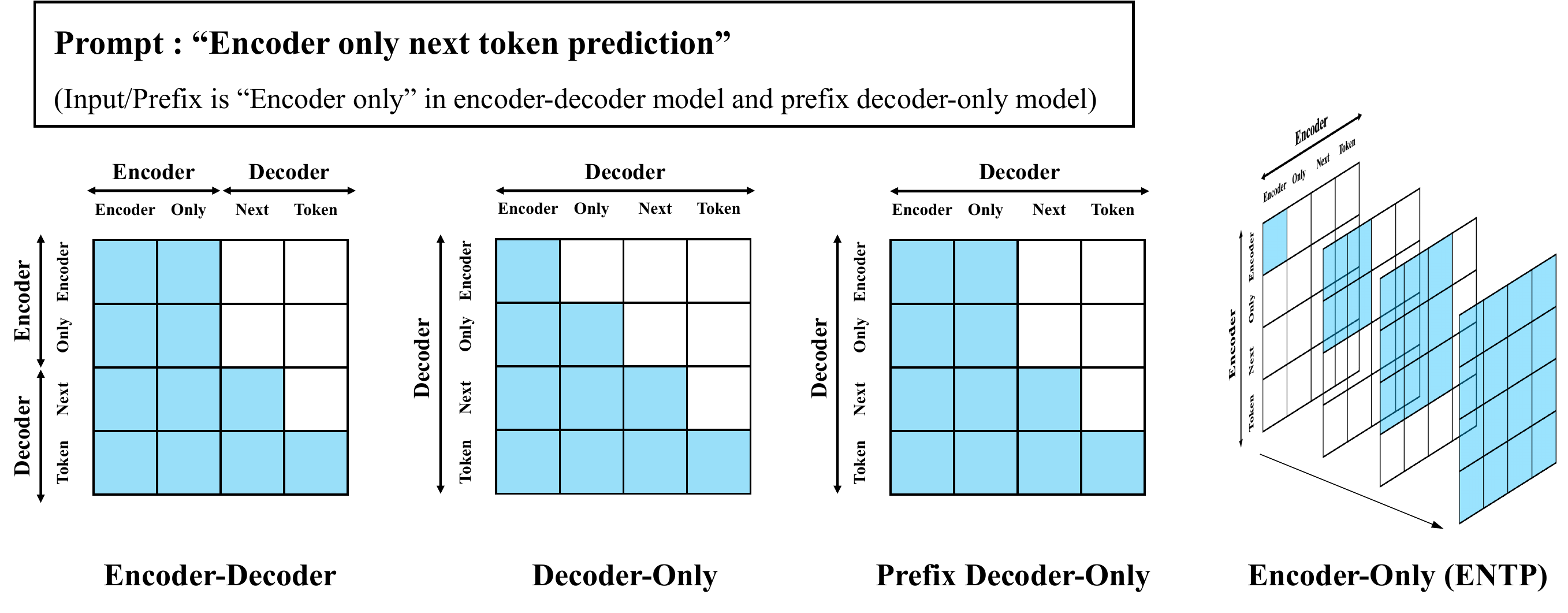}
    \caption{\textbf{Attention patterns of different Transformer architectures in next token prediction.} Encoder-decoder and prefix decoder-only models first perform full attention on the input (prefix), and then use causal attention to predict subsequent tokens. In contrast, decoder-only models apply causal attention to all tokens, without distinguishing between input and output. Encoder-only models also do not separate input and output, but they recalculate attention from scratch for each token prediction, performing full attention on all tokens.}
    \label{fig:attention_pattern}
    \vspace{-1em}
\end{figure}

\section{Experiment Details and Additional Results}
In all experiments, we train encoders in the same manner as decoders, processing entire sequences in each batch with a single gradient optimization step. Although this approach does not offer the same efficiency benefits for encoders as it does for decoders, we adopt it to maintain consistency between the training processes of both models.

\subsection{$\operatorname{Count3}$ with LLM}
\label{sec:appendix_llm}
In the main paper, we fine-tuned two LLMs for $\operatorname{Count3}$: Llama3-8B~\citep{dubey2024llama3herdmodels} and GPT-4o~\citep{openai2024gpt4technicalreport}. Here, we provide the details about the fine-tuning of each model. For GPT-4o, we used the official API, setting the batch size to 4 and the learning rate multiplier to 10. For Llama3-8B, we employed LoRA fine-tuning~\cite{hu2022lora} with a batch size of 16 and a learning rate of $1.4\times10^{-4}$. Regarding prompt design, we included the algorithm code in the prompt so that the LLMs could leverage its knowledge of natural language (see Table~\ref{table:llm_prompt_code}). We note that the loss was applied only to the answer part of the prompt.

Additionally, we verify the validity of the prompt design used for LLM fine-tuning. To this end, we modify the task from $\operatorname{Count3}$ to $\operatorname{Count2}$ \footnote{$\operatorname{Count2}(x_1,x_2,\dots,x_n) \coloneqq \Big| \big\{i \in n : x_i + x_n \equiv 0 \pmod n\big\}\Big| \pmod n. $} and fine-tune the Llama3-8B using prompts that include algorithmic code, as in the main experiment. As shown in \Cref{fig:llm_pair}, the model successfully learns $\operatorname{Count2}$ with the proposed prompt design, achieving high sequence accuracy. This demonstrates that the model's difficulty in learning $\operatorname{Count3}$ is due to the characteristics of causal decoder-only architecture, rather than an issue with the training strategy, such as the prompt design.

\begin{figure}[htbp]
    \centering
    \includegraphics[width=0.95\linewidth]{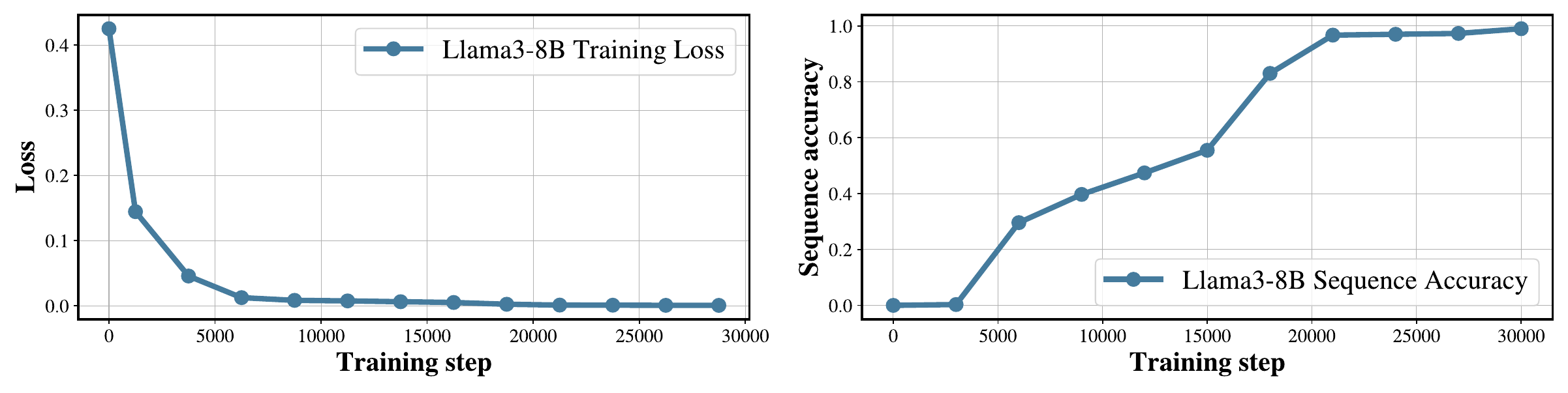}
    \caption{\textbf{Results of LLM fine-tuning on $\operatorname{Count2}$.} The Llama3-8B successfully learns $\operatorname{Count2}$ when using the same prompt format as $\operatorname{Count3}$. This demonstrates that the reason LLMs struggle with $\operatorname{Count3}$ is not due to the complexity of the prompt, but rather the characteristics of the decoder-only model.}
    \label{fig:llm_pair}
\end{figure}

\begin{table}[t]
\centering
\caption{\textbf{Example of prompt used for LLM experiments.} We include the code for the algorithm in the prompt to leverage the knowledge of the LLMs.}
\label{table:llm_prompt_code}
\vspace{0.5em}
\resizebox{.95\textwidth}{!}{
\begin{tabular}{@{}l@{}}

\tcbox[colback=white,boxrule=0.8pt,arc=3mm]{
\begin{tblr}[c]{@{}X@{}}

\textbf{Prompt:} \\
def f(x: list[int]) -$>$ int:\\
\qquad n = len(x) \\
\qquad count = 0 \\
\qquad for i in range(n): \\
\qquad \qquad for j in range(n): \\
\qquad \qquad \qquad if (x[i] + x[j] + x[-1]) \% n == 0: \\
\qquad \qquad \qquad \qquad count += 1 \\
\qquad return count \% n \\ 
x = [52, 14, 22, 48, 28, 37, 3, 28, 14, 1, 12, 20, 38, 48, 51, 41] \\
for \_ in range(48): \\
\qquad x.append(f(x)) \\
print(x) \\
What is the output of this code ? \\
\hline
\textbf{Output:} [52, 14, 22, 48, 28, 37, 3, 28, 14, 1, 12, 20, 38, 48, 51, 41, 0, 13, 14, 17, 12, 20, 17, 2, 10, 0, 6, 25, 26, 1, 28, 29, 22, 20, 19, 3, 22, 8, 4, 21, 24, 4, 39, 41, 36, 38, 40, 44, 16, 34, 7, 0, 5, 10, 1, 46, 5, 51, 8, 1, 32, 15, 44, 54] \\

\end{tblr} \\ }
\end{tabular}}

\end{table}

\subsection{In-context Learning}
\label{sec:appendix_icl}
In this section, we discuss four function classes in the in-context learning experiments~\citep{icl}, including the two function classes introduced in the main paper: linear function, sparse linear function, two-layer neural network, and decision tree.

For all function classes, the input $x_i$ is drawn from a Gaussian distribution $N\left(0, \mathbf{I}_d\right)$ where $d$ represents the dimension of $x_i$. We provide detailed descriptions of each function class below.

\paragraph{Linear function.}
We consider the class of linear functions $\mathcal{F}=\left\{f \mid f({x})={w}^{\top} {x}, {w} \in \mathbb{R}^d\right\}$ where $w$ is drawn from a Gaussian distribution $N\left(0, \mathbf{I}_d\right)$. Following previous work~\citep{icl}, we set $d=20$ and the number of data points 
$N$ to 40.

\paragraph{Sparse linear function.}
We also consider a sparse linear function, which is similar to a linear function setup. The difference is that after drawing $w$ from $N\left(0, \mathbf{I}_d\right)$, only $k$ randomly selected coordinates are kept, while the remaining ones are set to zero. Following previous work~\citep{icl}, we set $k=3$.

\paragraph{Two-layer neural network}
We examine the class of two-layer ReLU neural networks $\mathcal{F}=\left\{f \mid f(x)=\mathbf{W}_{(2)} \sigma\left(\mathbf{W}_{(1)} x\right), \mathbf{W}_{(2)} \in \mathbb{R}^{1 \times h}, \mathbf{W}_{(1)} \in \mathbb{R}^{h \times d}\right\}$ where $\sigma(\cdot)=\max (0, \cdot)$ (i.e., ReLU function). We set $h=100$, $d=20$, and the number of data point $N$ to 100.

\paragraph{Decision tree}
We consider the class of decision trees represented by full-binary trees of fixed height. In these trees, the leaf node values are drawn from $N\left(0, 1\right)$, and the non-leaf nodes are sampled from random integers between 0 and $d$, indicating an index of the input $x$. At each non-leaf node, if the value of the input at the specified index is positive, we move to the right; otherwise, we move to the left. Given an input, we start at the root node and repeat this process until reaching a leaf node. The value of the leaf node becomes the output of the function.

Additional in-context learning experiment results using the four function classes described above are provided in Figure~\ref{fig:icl_non_lin}. ENTP demonstrates better performance compared to the decoder-only models in linear regression and sparse linear regression, while exhibiting competitive performance in two-layer NN regression and decision tree.

\begin{figure}[htbp]
    \centering
    \includegraphics[width=0.95\linewidth]{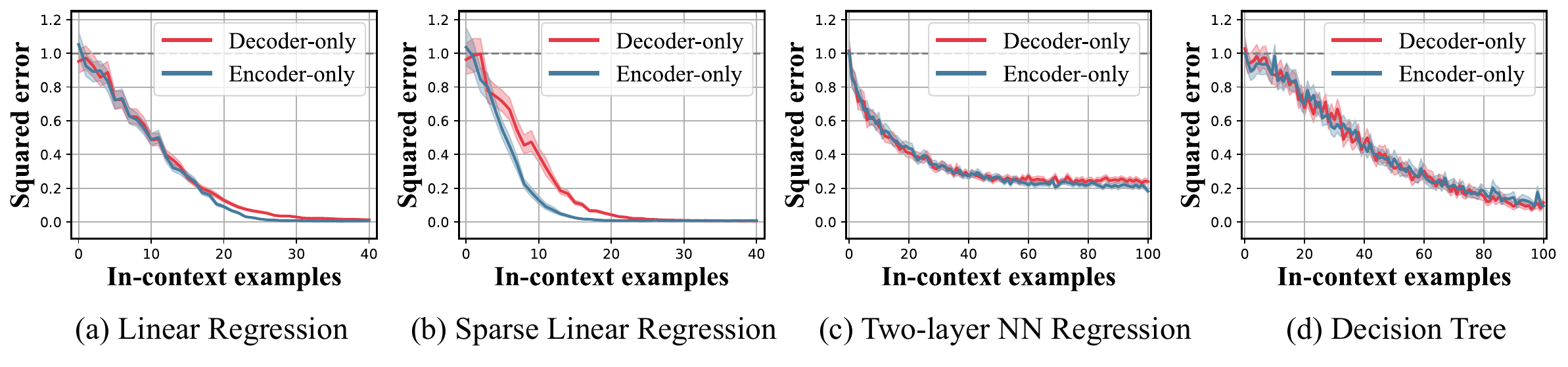}
    \caption{\textbf{Additional results of in-context learning experiment.} The encoder-only models demonstrate superior or competitive performance across all function classes compared to the decoder-only models.}
    \label{fig:icl_non_lin}
\end{figure}

\subsection{Addition}\label{sup:addition}

We test the sample complexity of encoder-only and decoder-only Transformers using addition tasks, with up to 3-digit numbers. We sample the dataset of all possible 3-digit addition examples using a method similar to the method described in \citet{leeteaching}. We start with all 1,000,000 3-digit, 2-digit, and 1-digit addition examples. Then we randomly remove 90\% of the 3-digit addition examples, adjusting the ratio of 3-digit to 2-digit examples from around 100:1 to around 10:1. Next we split the data into training, testing, and validation splits, stratified by the number of digits and carries in each addition example. All 1-digit addition examples were put into the training split. Since our models tend to over fit the training dataset, we save the model with the lowest loss on a validation dataset. We test decoder-only and encoder-only Transformers on plain and reversed addition tasks, using between 1.25k to 20k training examples. All sample complexity tests are run with at least 5 different seeds. We test small models, described in \Cref{tbl:model_sizes}.

\begin{figure}[htbp]
    \centering
    \includegraphics[width=.5\linewidth]{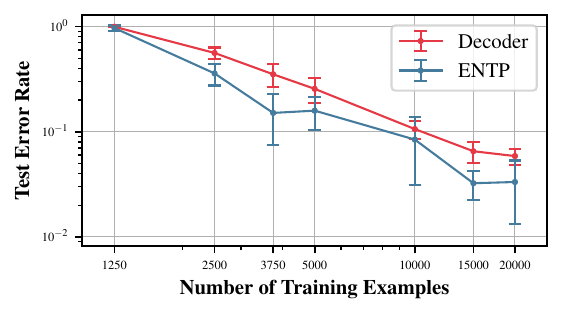}
    \vspace{-1em}
    \label{fig:plain_addition_small}
    \caption{\textbf{Addition Sample Complexity.} The train and test datasets include numbers with up to 3 digits. The dataset uses the plain addition format (\texttt{\$123+456=579\$}), unlike the results in \Cref{fig:reversed_addition_small}.}
\end{figure}

We train Transformers to add larger numbers and evaluate their ability to perform length generalization. Training is performed on numbers with up to 10 digits, while testing extends to numbers with up to 15 digits. Each model is trained on 100,000 examples using the reversed addition format \citep{leeteaching}. The numbers are sampled to ensure equal probability for each length, without duplicates. Consequently, in larger datasets, there are fewer 1-digit addition examples compared to 10-digit ones, as the total number of possible 1-digit examples is smaller. All length generalization tests are run with 3 different seeds. We test Small-Deep models described in \Cref{tbl:model_sizes}.

\clearpage
\subsection{Model Sizes}
In \Cref{tbl:model_sizes}, we provide the configurations of the Transformer architectures used in the experiments from the main paper.
\setlength{\tabcolsep}{16pt}
\renewcommand{\arraystretch}{1.1} 
\begin{table}[h]
    \centering
    \caption{Model specifications.}
    \label{tbl:model_sizes}
    \vspace{0.4em}
    \resizebox{0.8\linewidth}{!}{
       \begin{tabular}{c|ccc}
            \toprule
            Name & Number of Layers & Number of Heads & Embedding Dimension\\
            \midrule
            Small & 3  & 3 & 192 \\
            Medium & 6  & 6 & 384 \\
            Large & 12 & 12 & 768 \\
            Small-Deep & 8 & 2 & 128 \\
            \bottomrule
        \end{tabular}
    }
\end{table}

\subsection{OpenWebText}
In \Cref{tab:owt_hyperparameters}, we provide the hyperparameters used in the OpenWebText experiments described in the main paper.

\begin{table}[h]
\centering
\caption{OpenWebText Hyperparameters}
\vspace{0.6em}
\begin{tabular}{l|l}
\hline
Parameter      & Value      \\ \hline
warmup\_iters           & 2000                \\ 
lr\_decay\_iters        & 600,000             \\ 
min\_lr                 & 0.00006             \\
max\_lr                 & 0.0006              \\
beta1                   & 0.9                 \\
beta2                   & 0.95                \\ 
weight\_decay           & 0.1                 \\ 
block\_size             & 128                 \\ 
batch\_size             & 32                  \\ \hline
\end{tabular}
\label{tab:owt_hyperparameters}
\end{table}

\section{Implementation of Attention Using $O(D)$ Memory}\label{sec:rasp-triplet-counting}
\begin{algorithm}[h]
   \caption{Implementation of Attention Using $O(D)$ Memory}
   \label{alg:linear-memory-attn}
\begin{algorithmic}[1]
    \REQUIRE $q \in \mathbb R^{n \times d}$, $k \in \mathbb R^{n \times d}$, $v \in \mathbb R^{n \times D}$
    \STATE $y_n \gets \mathbf{0}^{D}$
    \STATE $a \gets \mathbf{0}^{D}$ 
    \STATE $b \gets 0$ 
    \FOR{$j = 1, \dots, n$}
        \STATE $c \gets \exp(q_n^T k_j)$ 
        \STATE $a \gets a + cv_j$ 
        \STATE $b \gets b + c$ 
    \ENDFOR
    \STATE $y_n \gets \frac{a}{b}$
    \STATE \textbf{return} $y_n$
  \end{algorithmic}
\end{algorithm}

\clearpage
\section{$\operatorname{Count3}$ Algorithms}\label{sec:count3-algs}
\begin{algorithm}[ht]
   \caption{Algorithm to compute \textbf{$\operatorname{Count3}$} in $O(n^2)$ time and $O(1)$ space}
   \label{alg:tc1}
\begin{algorithmic}[1]
    \REQUIRE length $n$ sequence of integers $(x_1,\dots,x_n)$
    \STATE $\text{count} \gets 0$
    \FOR{$i  = 1, \dots, n$}
        \FOR{$j  = 1, \dots, n$}
            \IF{$(x_i + x_j + x_n) \equiv 0 \pmod n$}
                \STATE $\text{count} \gets \text{count} + 1$
            \ENDIF
        \ENDFOR
    \ENDFOR \\
    \STATE \textbf{return} $\text{count} \pmod n$
  \end{algorithmic}
\end{algorithm}

\begin{algorithm}[ht]
   \caption{Algorithm to compute \textbf{$\operatorname{Count3}$} in $O(n)$ time and $O(n)$ space}
   \label{alg:tc2}
\begin{algorithmic}[1]
    \REQUIRE length $n$ sequence of integers $(x_1,\dots,x_n)$
    \STATE $\text{count} \gets 0$
    \STATE table $\gets$ zero-indexed length-$n$ array of $0$'s
    \FOR{$i  = 1, \dots, n$}
        \STATE $k \gets -x_i \mod n$
        \STATE $\text{table}[k] \gets \text{table}[k] + 1$
    \ENDFOR
    \FOR{$i  = 1, \dots, n$}
        \STATE $k \gets (x_i + x_n) \mod n$
        \STATE $\text{count} \gets \text{count} + \text{table}[k]$
    \ENDFOR
    \STATE  \textbf{return} $\text{count} \pmod n$
  \end{algorithmic}
\end{algorithm}

\clearpage
\section{RASP Algorithms}\label{sec:rasp-algs}
In Algorithm \ref{alg:rasp_count_triplets}, \ref{alg:rasp_has_triplet}, and \ref{alg:rasp_count_triplets_cot}, we provide the Python RASP implementation~\citep{rasp-len-gen} for $\operatorname{Count3}$ and $\operatorname{Match3}'$.

\begin{algorithm}[ht]
\caption{$\operatorname{Count3}$ RASP Encoder Implementation}
\label{alg:rasp_count_triplets}
\definecolor{codeblue}{rgb}{0.28125,0.46875,0.8125}
\definecolor{codegreen}{rgb}{0,0.6,0}
\definecolor{codegray}{rgb}{0.5,0.5,0.5}

\lstset{
    basicstyle=\fontsize{7.5pt}{7.5pt}\ttfamily,
    commentstyle=\fontsize{7.5pt}{7.5pt}\color{codegreen},
    keywordstyle=\fontsize{7.5pt}{7.5pt}\color{magenta},
}
\begin{lstlisting}[language=python, breaklines=True, showstringspaces=false]
def g(a, b):
    return a if a < b else a - b

    
def count_triplets(x):
    idxs = indices(x)

    # set n[i] = len(x) and last_x[i] = x[-1] for all i (only possible with encoder)
    n = sel_width(select(k=x, q=x, pred=true))
    last_x = kqv(k=idxs, q=n - 1, v=x, pred=equals, reduction="mean")

    # g(a, b) is equivalent a % b if 0 <= a < 2 * b
    y = seq_map(n - x, n, g)  # y[i] = -x[i] % n
    z = seq_map(x + last_x, n, g)  # z[i] = (x[i] + x[-1]) % n

    # conut the number of (i, j) such that y[i] == z[j]
    # c = sum(A), where A[i, j] = 1 if y[i] == z[j] else 0
    c = kqv(
        k=full(x, 1),
        q=full(x, 1),
        v=sel_width(select(k=z, q=y, pred=equals)) * n,  # sum(v) = mean(v * n)
        pred=equals,
        reduction="mean",
    )

    # conpute count % n
    c -= idxs * n
    # because count <= n^2, there exists i such that c[i] = count % n or c[i] = n
    # the case c[i] = n is handled by the default value (0) when no keys are selected
    return kqv(k=c, q=n, v=c, pred=lambda a, b: 0 <= a and a < b, reduction="mean")
\end{lstlisting}
\end{algorithm}

\begin{algorithm}[ht]
\caption{$\operatorname{Match3}'$ RASP Decoder Implementation}
\label{alg:rasp_has_triplet}
\definecolor{codeblue}{rgb}{0.28125,0.46875,0.8125}
\definecolor{codegreen}{rgb}{0,0.6,0}
\definecolor{codegray}{rgb}{0.5,0.5,0.5}

\definecolor{codeblue}{rgb}{0.28125,0.46875,0.8125}
\lstset{
    basicstyle=\fontsize{7.5pt}{7.5pt}\ttfamily,
    commentstyle=\fontsize{7.5pt}{7.5pt}\color{codegreen},
    keywordstyle=\fontsize{7.5pt}{7.5pt}\color{magenta},
}
\begin{lstlisting}[language=python, breaklines=True, showstringspaces=false]
def has_triplet(x):
    idxs = indices(x)
    first_x = kqv(k=idxs, q=full(x, 0), v=x, pred=equals, reduction="mean", causal=True)

    # use bitmask to compute mod
    y = -x & 127  # y[i] = -x[i] % 128
    z = (first_x + x) & 127  # z[i] = (x[0] + x[i]) % 128

    # max_count[-1] > 0 if there exists (i, j) such that y[i] == z[j]
    max_count = kqv(
        k=full(x, 1),
        q=full(x, 1),
        v=sel_width(select(k=y, q=z, pred=equals)),
        pred=equals,
        reduction="max",
    )

    return tok_map(max_count, lambda a: min(a, 1))  # return 0 or 1
\end{lstlisting}
\end{algorithm}

\begin{algorithm}[ht]
\caption{$\operatorname{Count3}$ RASP Decoder COT Implementation}
\label{alg:rasp_count_triplets_cot}
\definecolor{codeblue}{rgb}{0.28125,0.46875,0.8125}
\definecolor{codegreen}{rgb}{0,0.6,0}
\definecolor{codegray}{rgb}{0.5,0.5,0.5}

\definecolor{codeblue}{rgb}{0.28125,0.46875,0.8125}
\lstset{
    basicstyle=\fontsize{7.5pt}{7.5pt}\ttfamily,
    commentstyle=\fontsize{7.5pt}{7.5pt}\color{codegreen},
    keywordstyle=\fontsize{7.5pt}{7.5pt}\color{magenta},
}
\begin{lstlisting}[language=python, breaklines=True, showstringspaces=false]
def count_triplets(x):
    idxs = indices(x)
    n = kqv(k=x, q=full(x, EOS), v=idxs, pred=equals, reduction="min", causal=True)
    n = tok_map(n, lambda a: a if a else -2)
    last_x = kqv(k=idxs, q=n - 1, v=x, pred=equals, reduction="mean")
    seq_len = kqv(k=x, q=x, v=idxs, pred=true, reduction="max", causal=True)

    i = seq_len - n
    j = seq_len - 2 * n
    xi = kqv(k=idxs, q=i, v=x, pred=equals, reduction="max", causal=True)
    xj = kqv(k=idxs, q=j, v=x, pred=equals, reduction="max", causal=True)

    y = (n - xi) % n + 1
    z = (last_x + xj) % n + 1

    y_mask_write = (n <= idxs) & (idxs < 2 * n)
    z_mask_write = (2 * n <= idxs) & (idxs < 3 * n)
    y_mask_read = (n < idxs) & (idxs <= 2 * n)
    z_mask_read = (2 * n < idxs) & (idxs <= 3 * n)

    z_count = sel_width(
        select(
            k=x * y_mask_read, 
            q=z, 
            pred=lambda a, b: a == b and a != 0, 
            causal=True,
        )
    )

    count = kqv(
        k=z_mask_read,
        q=z_mask_read,
        v=n * x * z_mask_read,
        pred=lambda a, b: a & b,
        reduction="mean",
        causal=True,
    )
    ans = count % n

    ans_mask_write = idxs == 3 * n
    eos_mask_write = idxs > 3 * n

    return (
        y * y_mask_write
        + z_count * z_mask_write
        + ans * ans_mask_write
        + EOS * eos_mask_write
    )
\end{lstlisting}
\end{algorithm}

\end{document}